\newtheorem{lemma}{Lemma}
\newtheorem{corollary}{Corollary}
\newtheorem{theorem}{Theorem}
\newcommand{\loss}{\ell}
\newcommand{\hloss}{\wh{\ell}}
\newcommand{\tloss}{\wt{\ell}}
\newcommand{\rew}{r}
\newcommand{\hrew}{\wh{\rew}}
\newcommand{\trew}{\wt{\rew}}
\newcommand{\real}{\mathbb{R}}
\newcommand{\OO}{\mathcal{O}}
\newcommand{\tOO}{\wt{\OO}}
\newcommand{\II}[1]{\mathbb{I}_{\left\{#1\right\}}}
\newcommand{\PP}[1]{\mathbb{P}\left[#1\right]}
\newcommand{\EE}[1]{\mathbb{E}\left[#1\right]}
\newcommand{\PPcc}[2]{\mathbb{P}\left[\left.#1\right|#2\right]}
\newcommand{\EEcc}[2]{\mathbb{E}\left[\left.#1\right|#2\right]}
\newcommand{\ev}[1]{\left\{#1\right\}}
\newcommand{\pa}[1]{\left(#1\right)}
\newcommand{\bpa}[1]{\bigl(#1\bigr)}
\newcommand{\F}{\mathcal{F}}
\renewcommand{\th}{\ensuremath{^{\mathrm{th}}}}
\def\argmin{\mathop{\rm arg\, min}}
\newcommand{\bV}{\boldsymbol{V}}
\newcommand{\be}{\boldsymbol{e}}
\newcommand{\bloss}{\bm{\ell}}
\newcommand{\xp}{\xi}
\newcommand{\bxp}{\bm{\xp}}
\newcommand{\bp}{\boldsymbol{p}}
\newcommand{\wh}{\widehat}
\newcommand{\wt}{\widetilde}
\newcommand{\exph}{\textsc{Exp3}\xspace}
\newcommand{\exphix}{\textsc{Exp3-IX}\xspace}
\newcommand{\exphp}{\textsc{Exp3.P}\xspace}
\newcommand{\expn}{\textsc{Exp4}\xspace}
\newcommand{\expnix}{\textsc{Exp4-IX}\xspace}
\newcommand{\exphsix}{\textsc{Exp3-SIX}\xspace}
\newcommand{\ra}{\rightarrow}
\newcommand{\transpose}{^\mathsf{\scriptscriptstyle T}}
\definecolor{PalePurp}{rgb}{0.66,0.57,0.66}
\title{Explore no more: Improved high-probability regret bounds for non-stochastic bandits}
\author{
Gergely Neu\thanks{The author is currently with the Department of Information and Communication Technologies, Pompeu 
Fabra University, Barcelona, Spain.} \\
SequeL team \\
INRIA Lille -- Nord Europe\\
\texttt{gergely.neu@gmail.com}
}
\begin{document}

\maketitle

\begin{abstract}
This work addresses the problem of regret minimization in non-stochastic multi-armed bandit problems, focusing on 
performance guarantees that hold with high probability.
Such results are rather scarce in the literature since proving them requires a large deal of technical effort and 
significant modifications to the standard, more intuitive algorithms that come only with guarantees that hold on 
expectation. 
One of these modifications is forcing the learner to sample arms from the uniform distribution at least 
$\Omega(\sqrt{T})$ times over $T$ rounds, which can adversely affect performance if many of the arms are suboptimal. 
While it is widely conjectured that this property is essential for proving high-probability regret bounds, we show in 
this paper that it is possible to achieve such strong results without this undesirable exploration component. Our result 
relies on a simple and intuitive loss-estimation strategy called \emph{Implicit eXploration} (IX) that allows a 
remarkably clean analysis. To demonstrate the flexibility of our technique, we derive several improved high-probability 
bounds for various extensions of the standard multi-armed bandit framework.
Finally, we conduct a simple experiment that illustrates the robustness of our implicit exploration technique.
\end{abstract}

\section{Introduction}
Consider the problem of regret minimization in non-stochastic multi-armed bandits, as defined in the classic
paper of \citet*{auer2002bandit}. This sequential decision-making problem can be formalized as a repeated game between 
a \emph{learner} and an \emph{environment} (sometimes called the \emph{adversary}). In each round $t=1,2,\dots,T$, the 
two players interact as
follows: The learner picks an \emph{arm} (also called an \emph{action}) $I_t\in[K] = \ev{1,2,\dots,K}$ and the
environment selects a loss function $\loss_t: [K] \ra [0,1]$, where the loss associated with arm $i\in[K]$ is denoted as
$\loss_{t,i}$. Subsequently, the learner incurs and observes the loss $\loss_{t,I_t}$. Based solely on these 
observations, the goal of the learner is to choose its actions so as to accumulate as little loss as possible during 
the course of the game. As traditional in the online learning literature \cite{CBLu06:book}, we measure the performance 
of the learner in  terms of the \emph{regret} defined as
\[
 R_T = \sum_{t=1}^T \loss_{t,I_t} - \min_{i\in[K]}\sum_{t=1}^T \loss_{t,i}.
\]
We say that the environment is \emph{oblivious} if it selects the sequence of loss vectors irrespective of the past
actions taken by the learner, and \emph{adaptive} (or \emph{non-oblivious}) if it is allowed to choose $\loss_t$ as a 
function of the past actions $I_{t-1},\dots,I_1$. An equivalent formulation of the multi-armed bandit game uses the 
concept of \emph{rewards} (also called \emph{gains} or \emph{payoffs}) instead of losses: in this version, the adversary 
chooses the sequence of \emph{reward functions} $(r_t)$ with $r_{t,i}$ denoting the reward given to the learner for 
choosing action $i$ in round $t$. In this game, the learner aims at maximizing its total rewards. We will refer to the 
above two formulations as the \emph{loss game} and the \emph{reward game}, respectively.

Our goal in this paper is to construct algorithms for the learner that guarantee that the regret grows sublinearly. 
Since it is well known that no deterministic learning algorithm can achieve this goal \citep{CBLu06:book}, we are 
interested in \emph{randomized} algorithms. Accordingly, the regret $R_T$ then becomes a random variable that we need 
to bound in some probabilistic sense.
Most of the existing literature on non-stochastic bandits is concerned with bounding 
the \emph{pseudo-regret} (or \emph{weak regret}) defined as
\[
 \wh{R}_T = \max_{i\in[K]}\EE{\sum_{t=1}^T \loss_{t,I_t} - \sum_{t=1}^T \loss_{t,i}},
\]
where the expectation integrates over the randomness injected by the learner.
Proving bounds on the actual regret that hold with high probability is considered to be a significantly harder task
that 
can be achieved by serious changes made to the learning algorithms and much more complicated analyses. One particular 
common belief is that in order to guarantee high-confidence performance guarantees, the learner cannot avoid repeatedly 
sampling arms from a uniform distribution, typically $\Omega\bpa{\sqrt{KT}}$ times 
\cite{auer2002bandit,audibert10inf,BLLRS11,bubeck12survey}. It is easy to see that such 
\emph{explicit exploration} can impact the empirical performance of learning algorithms in a very negative way if there 
are many arms with high losses: even if the base learning algorithm quickly learns to focus on good arms, explicit 
exploration still forces the regret to grow at a steady rate. As a result, algorithms with high-probability performance 
guarantees tend to perform poorly even in very simple problems \citep{SCALS12,BLLRS11}.


In the current paper, we propose an algorithm that guarantees strong regret bounds that hold with high 
probability without the explicit exploration component. 
One component that we preserve from the classical recipe for such algorithms is the \emph{biased estimation of losses}, 
although our bias is of a much more delicate nature, and arguably more elegant than previous approaches. In particular, 
we adopt the \emph{implicit exploration} (IX) strategy first proposed by \citet*{KNVM14} for the problem of online 
learning with side-observations. As we show in the current paper, this simple loss-estimation strategy allows proving 
high-probability bounds for a range of non-stochastic bandit problems including bandits with expert advice, tracking 
the best arm and bandits with side-observations. Our proofs are arguably cleaner and less involved than previous ones, 
and very elementary in the sense that they do not rely on advanced results from probability theory like Freedman's 
inequality \cite{Fre75}.
The resulting bounds are tighter than all previously known bounds and 
hold simultaneously for all confidence levels, unlike most previously known bounds \citep{auer2002bandit,BLLRS11}. For 
the first time in the literature, we also provide high-probability bounds for anytime algorithms that do not require 
prior knowledge of the time horizon $T$. A minor conceptual improvement in our analysis is a direct treatment of the 
loss game, as opposed to previous analyses that focused on the reward game, making our treatment more coherent with 
other state-of-the-art results in the online learning literature\footnote{In fact, studying the loss game is 
colloquially known to allow better constant factors in the bounds in many settings (see, e.g., 
\citet{bubeck12survey}). Our result further reinforces these observations.}.

The rest of the paper is organized as follows. In Section~\ref{sec:est}, we review the known techniques for 
proving high-probability regret bounds for non-stochastic bandits and describe our implicit exploration strategy in 
precise terms. Section~\ref{sec:apps} states our main result concerning the concentration of the IX loss estimates and 
shows applications of this result to several problem settings. Finally, we conduct a set of simple experiments to
illustrate the benefits of implicit exploration over previous techniques in Section~\ref{sec:exp}.

\section{Explicit and implicit exploration}\label{sec:est}
Most principled learning algorithms for the non-stochastic bandit problem are constructed by using a standard online
learning algorithm such as the exponentially weighted forecaster (\cite{Vov90,LiWa94,FS97}) or follow the perturbed
leader (\cite{Han57,KV05}) as a black box, with the true (unobserved) losses replaced by some appropriate estimates.
One of the key challenges is constructing reliable estimates of the losses $\loss_{t,i}$ for all $i\in[K]$ based on the
single observation $\loss_{t,I_t}$. Following \citet{auer2002bandit}, this is traditionally achieved by using
importance-weighted loss/reward estimates of the form
\begin{equation}\label{eq:tradest}
 \hloss_{t,i} = \frac{\loss_{t,i}}{p_{t,i}} \II{I_t = i} 
 \qquad\mbox{or}\qquad
 \hrew_{t,i} = \frac{\rew_{t,i}}{p_{t,i}} \II{I_t = i}
\end{equation}
where $p_{t,i} = \PPcc{I_t = i}{\F_{t-1}}$ is the probability that the learner picks action $i$ in round $t$, 
conditioned on the observation history $\F_{t-1}$ of the learner up to the beginning of round $t$.
It is easy to show that these estimates are unbiased for all $i$ with $p_{t,i}>0$ in the sense that
$\mathbb{E}\hloss_{t,i} = \loss_{t,i}$ for all such $i$. 

For concreteness, consider the \exph algorithm of \citet{auer2002bandit} as described in
\citet[Section~3]{bubeck12survey}. In every round $t$, this algorithm uses the loss estimates defined in
Equation~\eqref{eq:tradest} to compute the \emph{weights} $w_{t,i} = \exp\bpa{-\eta\sum_{s=1}^{t-1} \hloss_{s-1,i}}$ for
all $i$ and some positive parameter $\eta$ that is often called the \emph{learning rate}. Having computed these weights,
\exph draws arm $I_t=i$ with probability proportional to $w_{t,i}$. Relying on the unbiasedness of
the estimates~\eqref{eq:tradest} and an optimized setting of $\eta$, one can prove that \exph enjoys a pseudo-regret
bound of $\sqrt{2TK\log K}$. However, the fluctuations of the loss estimates around the true losses are too large to
permit bounding the true regret with high probability.
To keep these fluctuations under control, \citet{auer2002bandit} propose to use the \emph{biased reward-estimates}
\begin{equation}\label{eq:trew1}
 \trew_{t,i} = \hrew_{t,i}  + \frac{\beta}{p_{t,i}}
\end{equation}
with an appropriately chosen $\beta>0$. 
Given these estimates, the \exphp algorithm of \citet{auer2002bandit} computes the weights
$
 w_{t,i} = \exp\bpa{\eta \sum_{s=1}^{t-1}\trew_{s,i}}
$
for all arms $i$ and then samples $I_t$ according to the distribution
\[
 p_{t,i} = (1-\gamma) \frac{w_{t,i}}{\sum_{j=1}^K w_{t,j}} + \frac{\gamma}{K},
\]
where $\gamma \in[0,1]$ is the exploration parameter. The argument for this \emph{explicit exploration} is that it 
helps to keep the range (and thus the variance) of the above reward estimates bounded, thus enabling the use of (more 
or less) standard 
concentration results\footnote{Explicit exploration is believed to be inevitable for proving bounds in the reward game 
for various other reasons, too---see \citet{bubeck12survey} for a discussion.}. 
In particular, the key element in the analysis of \exphp \cite{auer2002bandit,bubeck12survey,BLLRS11,BaDaHaKaRaTe08} is
showing that the inequality
\[
 \sum_{t=1}^T \pa{\rew_{t,i} - \trew_{t,i}} \le \frac{\log (K/\delta)}{\beta}
\]
holds simultaneously for all $i$ with probability at least $1-\delta$. 
In other words, this shows that the cumulative estimates $\sum_{t=1}^T\trew_{t,i}$ are upper confidence 
bounds for the true rewards $\sum_{t=1}^T\rew_{t,i}$.

In the current paper, we propose to use the loss estimates defined as
\begin{equation}\label{eq:ix}
 \tloss_{t,i} = \frac{\loss_{t,i}}{p_{t,i} + \gamma_t} \II{I_t = i},
\end{equation}
for all $i$ and an appropriately chosen $\gamma_t > 0$, and then use the resulting estimates in an 
exponential-weights algorithm scheme without any explicit exploration.
Loss estimates of this form were first used by \citet{KNVM14}---following them, we refer to this technique as
\emph{Implicit eXploration}, or, in short, IX.
In what follows, we argue that that IX as defined above achieves a similar variance-reducing effect as the one achieved 
by the combination of explicit exploration and the biased reward estimates of Equation~\eqref{eq:trew1}. In particular,
we show that the IX estimates~\eqref{eq:ix} constitute a lower confidence bound for the true losses which allows
proving high-probability bounds for a number of variants of the multi-armed bandit problem.

\section{High-probability regret bounds via implicit exploration}\label{sec:apps}
In this section, we present a concentration result concerning the IX loss estimates of Equation~\eqref{eq:ix}, and 
apply this result to prove high-probability performance guarantees for a number of non-stochastic bandit problems. 
The following lemma states our concentration result in its most general form:
\begin{lemma}\label{lem:fixbound}
 Let $\pa{\gamma_t}$ be a \emph{fixed} non-increasing sequence with $\gamma_t\ge 0$ and let
 $\alpha_{t,i}$ be nonnegative $\F_{t-1}$-measurable random variables satisfying $\alpha_{t,i}\le 2\gamma_t$ for all 
$t$ and 
$i$. Then, with probability at least $1-\delta$,
 \[
  \sum_{t=1}^T \sum_{i=1}^K \alpha_{t,i} \pa{\tloss_{t,i} - \loss_{t,i}} \le \log\pa{1/\delta}.
 \]
\end{lemma}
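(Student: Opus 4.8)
The plan is the classical exponential-moment (Chernoff) method: turn the quantity to be bounded into a nonnegative supermartingale and finish with Markov's inequality. Set $Z_0 = 1$ and
\[
 Z_t \eqdef \exp\Bpa{\sum_{s=1}^t \sum_{i=1}^K \alpha_{s,i}\pa{\tloss_{s,i} - \loss_{s,i}}} .
\]
If $\EEcc{Z_t}{\F_{t-1}} \le Z_{t-1}$ for every $t$, then $\EE{Z_T} \le 1$, and Markov's inequality gives $\PP{Z_T \ge 1/\delta} \le \delta$, i.e. $\sum_t\sum_i \alpha_{t,i}(\tloss_{t,i}-\loss_{t,i}) \le \log(1/\delta)$ with probability at least $1-\delta$, which is the claim. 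Since $Z_{t-1}$ is nonnegative and $\F_{t-1}$-measurable, the whole argument reduces to the one-step bound $\EEcc{\exp\bpa{\textstyle\sum_i \alpha_{t,i}(\tloss_{t,i}-\loss_{t,i})}}{\F_{t-1}} \le 1$; here I would use that $\alpha_{t,i}$ (by assumption), $p_{t,i} = \PPcc{I_t=i}{\F_{t-1}}$ (by definition), and $\loss_{t,i}$ (because even a non-oblivious adversary reacts only to $I_1,\dots,I_{t-1}$) are all $\F_{t-1}$-measurable, hence constants under the conditional expectation.

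The core is a deterministic pointwise estimate on the IX loss estimate. Since only the played arm contributes, $\sum_i \alpha_{t,i}\tloss_{t,i} = \alpha_{t,I_t}\tloss_{t,I_t}$, so it suffices to control $\exp(\alpha_{t,i}\tloss_{t,i})$ coordinatewise. Starting from $\alpha_{t,i}\tloss_{t,i} = \frac{\alpha_{t,i}\loss_{t,i}}{p_{t,i}+\gamma_t}\II{I_t=i}$, I would enlarge the denominator to $p_{t,i}+\alpha_{t,i}/2$ using $\gamma_t\ge \alpha_{t,i}/2$, then use $\loss_{t,i}\le 1$, to obtain $\alpha_{t,i}\tloss_{t,i} \le \frac{z}{1+z/2}$ with $z \eqdef \frac{\alpha_{t,i}\loss_{t,i}}{p_{t,i}}\II{I_t=i}$. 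Combining this with the elementary scalar inequality $\frac{z}{1+z/2}\le \log(1+z)$, valid for all $z\ge 0$ (both sides vanish at $z=0$ and, since $(1+z/2)^2 \ge 1+z$, the derivative of $\log(1+z)$ dominates), yields the crucial estimate
\[
 \alpha_{t,i}\tloss_{t,i} \le \log\Bpa{1 + \tfrac{\alpha_{t,i}\loss_{t,i}}{p_{t,i}}\II{I_t=i}} , \qquad\text{i.e.}\qquad \exp\pa{\alpha_{t,i}\tloss_{t,i}} \le 1 + \tfrac{\alpha_{t,i}\loss_{t,i}}{p_{t,i}}\II{I_t=i} .
\]

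The rest would be routine. Exponentiating the single-term sum, taking the conditional expectation, and using $\EEcc{\II{I_t=i}}{\F_{t-1}} = p_{t,i}$ (coordinates with $p_{t,i}=0$ drop out under the usual convention) together with $1+x\le e^x$,
\[
 \EEcc{\exp\Bpa{\textstyle\sum_i \alpha_{t,i}\tloss_{t,i}}}{\F_{t-1}} \le 1 + \sum_i \alpha_{t,i}\loss_{t,i} \le \exp\Bpa{\textstyle\sum_i \alpha_{t,i}\loss_{t,i}} ;
\]
dividing by the $\F_{t-1}$-measurable factor $\exp(\sum_i\alpha_{t,i}\loss_{t,i})$ gives the one-step bound, which closes the supermartingale step, and Markov's inequality finishes as above. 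I expect the only real obstacle to be isolating the right pointwise logarithmic inequality and checking that no slack is wasted, so that the constant in front of $\log(1/\delta)$ is exactly $1$: it is the joint use of $\gamma_t\ge\alpha_{t,i}/2$, of $\loss_{t,i}\le 1$, and of $\frac{z}{1+z/2}\le\log(1+z)$ that makes this happen. (Notably, the monotonicity of $(\gamma_t)$ does not seem to enter this particular statement.) The measurability bookkeeping --- which also covers non-oblivious adversaries --- and the Markov step are entirely standard, and no concentration machinery beyond this is needed.
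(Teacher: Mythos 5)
Your proposal is correct and follows essentially the same route as the paper's proof: bound $\alpha_{t,i}\tloss_{t,i}$ by $\log\bpa{1+\alpha_{t,i}\hloss_{t,i}}$ via $\frac{z}{1+z/2}\le\log(1+z)$, reduce to $1+\sum_i\alpha_{t,i}\hloss_{t,i}$ because only the played arm contributes, take conditional expectations to get a supermartingale, and finish with Markov's inequality. The only (harmless) cosmetic differences are that you absorb the factor $\alpha_{t,i}\le 2\gamma_t$ directly into the denominator before invoking the scalar inequality, where the paper instead uses $x\log(1+y)\le\log(1+xy)$ with $x=\alpha_{t,i}/(2\gamma_t)$, and you phrase the single-arm observation directly rather than through the product identity $\prod_i(1+\alpha_{t,i}\hloss_{t,i})=1+\sum_i\alpha_{t,i}\hloss_{t,i}$.
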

A particularly important special case of the above lemma is the following:
\begin{corollary}\label{cor:allbound}
 Let $\gamma_t = \gamma \ge 0$ for all $t$. With probability at least $1-\delta$,
 \[
  \sum_{t=1}^T \pa{\tloss_{t,i} - \loss_{t,i}} \le \frac{\log\pa{K/\delta}}{2\gamma}.
 \]
 simultaneously holds for all $i\in[K]$.
\end{corollary}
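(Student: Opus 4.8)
The plan is to read Corollary~\ref{cor:allbound} off Lemma~\ref{lem:fixbound} by a judicious choice of the weights $\alpha_{t,i}$, followed by a union bound over the $K$ arms. Concretely, I would first fix a single arm $j\in[K]$ and apply Lemma~\ref{lem:fixbound} with the deterministic weights $\alpha_{t,i} = 2\gamma\,\II{i=j}$ (and confidence parameter $\delta' $ to be chosen).

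First I would verify admissibility of this choice. Since $\gamma_t=\gamma$ for all $t$, the sequence $\pa{\gamma_t}$ is non-increasing and nonnegative, so the hypothesis on $\pa{\gamma_t}$ holds. The weights $\alpha_{t,i}=2\gamma\,\II{i=j}$ are constants, hence nonnegative and trivially $\F_{t-1}$-measurable, and they satisfy $\alpha_{t,i}\le 2\gamma = 2\gamma_t$ (with equality for $i=j$). Thus Lemma~\ref{lem:fixbound} applies and gives, with probability at least $1-\delta'$,
\[
 2\gamma\sum_{t=1}^T\pa{\tloss_{t,j}-\loss_{t,j}}
 = \sum_{t=1}^T\sum_{i=1}^K \alpha_{t,i}\pa{\tloss_{t,i}-\loss_{t,i}}
 \le \log\pa{1/\delta'},
\]
so that dividing by $2\gamma$ yields the per-arm bound $\sum_{t=1}^T\pa{\tloss_{t,j}-\loss_{t,j}}\le \log\pa{1/\delta'}/(2\gamma)$.

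To upgrade this to a statement that holds simultaneously for all $i\in[K]$, I would instantiate the above with $\delta'=\delta/K$ for each $j\in[K]$ and take a union bound: the probability that the per-arm inequality is violated for at least one arm is at most $K\cdot(\delta/K)=\delta$. On the complementary event, $\sum_{t=1}^T\pa{\tloss_{t,i}-\loss_{t,i}}\le \log\pa{K/\delta}/(2\gamma)$ holds for every $i$ at once, which is exactly the claim.

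I do not expect any real obstacle here: all the probabilistic content is already contained in Lemma~\ref{lem:fixbound}, and the only point requiring (minimal) care is that the plugged-in weights respect the measurability and boundedness constraints of that lemma, which they do by construction. The one genuine loss in this argument is the $\log K$ factor introduced by the union bound; avoiding it would require a fundamentally different, arm-uniform martingale argument, but that is outside the scope of the stated corollary.
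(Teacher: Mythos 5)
Your proposal is correct and matches the paper's own derivation: the paper obtains Corollary~\ref{cor:allbound} precisely by applying Lemma~\ref{lem:fixbound} with $\alpha_{t,i}=2\gamma\II{i=j}$ for each fixed $j$ and then taking a union bound with $\delta'=\delta/K$ (the self-contained proof given in the text is only a didactic re-run of the same supermartingale argument for this special case). Your checks of the measurability and boundedness conditions are exactly the care the lemma requires, so nothing is missing.
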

This corollary follows from applying Lemma~\ref{lem:fixbound} to the functions $\alpha_{t,i} = 2\gamma\II{i = j}$ for 
all $j$ and 
applying the union bound. The full proof of Lemma~\ref{lem:fixbound} is presented in the Appendix. For didactic 
purposes, we now present a direct proof for Corollary~\ref{cor:allbound}, which is essentially a simpler version of 
Lemma~\ref{lem:fixbound}.
\begin{proof}[Proof of Corollary~\ref{cor:allbound}]
 For convenience, we will use the notation $\beta = 2\gamma$. First, observe that
 \[
  \begin{split}
   \tloss_{t,i} =& \frac{\loss_{t,i}}{p_{t,i} + \gamma} \II{I_t = i} 
   \le \frac{\loss_{t,i}}{p_{t,i} + \gamma \loss_{t,i}} \II{I_t = i} 
   = \frac{1}{2 \gamma} \cdot \frac{2 
\gamma \loss_{t,i} / p_{t,i}}{1 + \gamma \loss_{t,i} / p_{t,i}} \II{I_t = i}
   \le \frac{1}{\beta} \cdot \log\pa{1 + \beta \hloss_{t,i}},
  \end{split}
 \]
 where the first step follows from $\loss_{t,i}\in [0,1]$ and last one from the elementary inequality
$\frac{z}{1+z/2} \le \log(1+z)$ that holds for all $z\ge
0$. Using the above inequality, we get that
 \[
 \begin{split}
  \EEcc{\exp\pa{\beta \tloss_{t,i}}}{\F_{t-1}} \le& \EEcc{1+\beta
\hloss_{t,i}}{\F_{t-1}}
\le 1+\beta \loss_{t,i} \le \exp\pa{\beta \loss_{t,i}},
 \end{split}
 \]
 where the second and third steps are obtained by using $\EEcc{\hloss_{t,i}}{\F_{t-1}} \le \loss_{t,i}$ that holds 
by definition of $\hloss_{t,i}$, and the inequality $1+z \le e^z$ that holds for all $z\in\real$. As a
result, the process $Z_t = \exp\bpa{\beta \sum_{s=1}^t \bpa{\tloss_{s,i} - \loss_{s,i}}}$ is a supermartingale 
with respect to $\pa{\F_t}$: $ \EEcc{Z_t}{\F_{t-1}} \le Z_{t-1}$.
Observe that, since $Z_0 = 1$, this implies $\EE{Z_T} \le \EE{Z_{T-1}} \le \ldots \le 1$, and thus by Markov's
inequality,
\[
\begin{split}
 \PP{\sum_{t=1}^T\bpa{\tloss_{t,i}  - \loss_{t,i}}> \varepsilon} &\le \EE{\exp\pa{\beta 
\sum_{t=1}^T\bpa{\tloss_{t,i}  - \loss_{t,i}}}} \cdot \exp(-\beta \varepsilon) \le \exp(-\beta \varepsilon)
\end{split}
\]
holds for any $\varepsilon>0$. The statement of the lemma follows from solving $\exp(-\beta \varepsilon) = \delta/K$ 
for $\varepsilon$ and using the union bound over all arms $i$.
\end{proof}

In what follows, we put Lemma~\ref{lem:fixbound} to use and prove improved high-probability performance guarantees 
for several well-studied variants of the non-stochastic bandit problem, namely, the multi-armed bandit problem with 
expert advice, tracking the best arm for multi-armed bandits, and bandits with side-observations. The 
general form of Lemma~\ref{lem:fixbound} will allow us to prove high-probability bounds for anytime algorithms that can 
operate without prior knowledge of $T$. For clarity, we will only provide such bounds for the standard multi-armed 
bandit setting; extending the derivations to other settings is left as an easy exercise.
For all algorithms, we prove bounds that scale linearly with $\log(1/\delta)$ and hold
simultaneously for all levels $\delta$. 
Note that this dependence can be improved to $\sqrt{\log(1/\delta)}$ for a
fixed confidence level $\delta$, if the algorithm can use this $\delta$ to tune its parameters. 
This is the way that Table~\ref{tab:results} presents our new bounds side-by-side with the best previously known ones.

\begin{table}
\begin{center}
\begin{tabular}{|l|c|c|}
\hline
\bfseries{Setting} & \bfseries{Best known regret bound} & \bfseries{Our new regret  bound}\\
\hline
Multi-armed bandits & $5.15 \sqrt{TK\log (K/\delta)}$ & $2\sqrt{2TK\log (K/\delta)}$\\
Bandits with expert advice & $6 \sqrt{TK\log (N/\delta)}$ & $2\sqrt{2TK \log (N/\delta)}$\\
Tracking the best arm & $7\sqrt{KTS \log (KT/\delta S)}$ & $2\sqrt{2 KTS \log (KT/\delta S)}$\\
Bandits with side-observations & $\tOO\bpa{\sqrt{mT}}$ & $\tOO\bpa{\sqrt{\alpha T}}$\\
\hline
\end{tabular}
\end{center}
\caption{Our results compared to the best previously known results in the four settings considered in
Sections~\ref{sec:exp3}--\ref{sec:exp3ix}. See the respective sections for references and notation.
}\label{tab:results}
\end{table}
\newpage
\subsection{Multi-armed bandits}\label{sec:exp3}
\begin{wrapfigure}{r}{0.5\textwidth}
\vspace{-.8cm}
\begin{minipage}{0.5\textwidth}
   \begin{algorithm}[H]
\caption{\exphix}
\label{alg:expix}
 \textbf{Parameters:} $\eta>0$, $\gamma>0$.\\
 \textbf{Initialization:} $w_{1,i} = 1$.\\
{\bfseries{for} $t = 1,2,\dots,T$, \bfseries{repeat}}
\begin{enumerate}
 \item $p_{t,i} = \frac{w_{t,i}}{\sum_{j=1}^K w_{t,j}}$.
 \item Draw $I_t  \sim \bp_t= (p_{t,1},\dots,p_{t,K})$.
\item  Observe loss $\loss_{t,I_t}$.
\item  $\tloss_{t,i} \gets \frac{\loss_{t,i}}{p_{t,i}+\gamma}\II{I_t = i}$ for all $i\in[K]$.
 \item   $w_{t+1,i} \gets w_{t,i} e^{-\eta\tloss_{t,i}}$ for all $i\in [K]$.
\end{enumerate}
\end{algorithm}
    \end{minipage}
    \vspace{-.8cm}
\end{wrapfigure}
In this section, we propose a variant of the \exph algorithm of \citet{auer2002bandit} that uses the IX 
loss estimates \eqref{eq:ix}: \exphix. The algorithm in its most general form uses two nonincreasing sequences of 
nonnegative parameters: $(\eta_t)$ and 
$(\gamma_t)$. In every round, \exphix chooses action $I_t = i$ with probability proportional to
\begin{equation}\label{eq:exph}
 p_{t,i} \propto w_{t,i} = \exp\pa{-\eta_t \sum_{s=1}^{t-1} \tloss_{s,i}},
\end{equation}
without mixing any explicit exploration term into the distribution. A fixed-parameter version of \exphix is presented as 
Algorithm~\ref{alg:expix}.

Our theorem below states a high-probability bound on the regret of \exphix. Notably, our bound exhibits the best known 
constant factor of $2\sqrt{2}$ in the leading term, improving on the 
factor of $5.15$ due to \citet{bubeck12survey}.  The best known leading constant for the pseudo-regret bound of \exph is
$\sqrt{2}$, also proved in \citet{bubeck12survey}.
\begin{theorem}\label{thm:main}
 Fix an arbitrary $\delta>0$. 
 With $\eta_t = 2\gamma_t = \sqrt{\frac{2\log K}{KT}}$ for all $t$, \exphix 
guarantees
 \[
  R_T \le 2\sqrt{2KT\log K} + \pa{\sqrt{\frac{2KT}{\log K}} + 1} \log\pa{2/\delta}
 \]
 with probability at least $1-\delta$.
 Furthermore, setting $\eta_t = 2\gamma_t = \sqrt{\frac{\log K}{Kt}}$ for all $t$, the bound becomes
 \[
  R_T \le 4\sqrt{KT\log K} + \pa{2\sqrt{\frac{KT}{\log K}} +1 } \log\pa{2/\delta}.
 \]
\end{theorem}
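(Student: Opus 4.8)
The plan is to bound the regret against a best fixed arm $j\in\argmin_{i\in[K]}\sum_{t=1}^T\loss_{t,i}$ by splitting it into three pieces that are handled, respectively, by a deterministic potential argument and by two applications of Lemma~\ref{lem:fixbound}. The starting point is the exact identity $\loss_{t,I_t}=\sum_{i=1}^K(p_{t,i}+\gamma_t)\tloss_{t,i}=\sum_i p_{t,i}\tloss_{t,i}+\gamma_t\sum_i\tloss_{t,i}$, which is immediate from~\eqref{eq:ix} since $(p_{t,i}+\gamma_t)\tloss_{t,i}=\loss_{t,i}\II{I_t=i}$. Summing over $t$, subtracting $\sum_t\loss_{t,j}$, and inserting $\pm\sum_t\tloss_{t,j}$ yields
\[
 R_T=\underbrace{\Bpa{\sum_{t=1}^T\sum_{i=1}^K p_{t,i}\tloss_{t,i}-\sum_{t=1}^T\tloss_{t,j}}}_{(\mathrm a)}+\underbrace{\sum_{t=1}^T\gamma_t\sum_{i=1}^K\tloss_{t,i}}_{(\mathrm b)}+\underbrace{\Bpa{\sum_{t=1}^T\tloss_{t,j}-\sum_{t=1}^T\loss_{t,j}}}_{(\mathrm c)}.
\]

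Term $(\mathrm a)$ is exactly the regret of an exponential-weights forecaster run on the \emph{nonnegative} vectors $\tbl_t$, so no probability is involved here: bounding the weight ratio $\sum_i p_{t,i}e^{-\eta_t\tloss_{t,i}}$ via $e^{-x}\le 1-x+\tfrac{x^2}{2}$ (valid for $x\ge0$) and telescoping gives, in the fixed-rate case $\eta_t\equiv\eta$,
\[
 (\mathrm a)\le\frac{\log K}{\eta}+\frac{\eta}{2}\sum_{t=1}^T\sum_{i=1}^K p_{t,i}\tloss_{t,i}^2\le\frac{\log K}{\eta}+\frac{\eta}{2}\sum_{t=1}^T\sum_{i=1}^K\tloss_{t,i},
\]
where the last step uses $p_{t,i}\tloss_{t,i}^2\le\tloss_{t,i}$ (because $\loss_{t,i}\le1$ and $p_{t,i}\le p_{t,i}+\gamma_t$). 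With $\eta=2\gamma$ the second-order term above has the same shape as $(\mathrm b)$, and together they equal $2\sum_t\sum_i\gamma_t\tloss_{t,i}$. This is where Lemma~\ref{lem:fixbound} does its job: applied with the (trivially $\F_{t-1}$-measurable) weights $\alpha_{t,i}=\gamma_t\le2\gamma_t$, it converts the estimates back into true losses, $2\sum_t\sum_i\gamma_t\tloss_{t,i}\le2\sum_t\sum_i\gamma_t\loss_{t,i}+2\log(1/\delta)\le2KT\gamma+2\log(1/\delta)$, using $\loss_{t,i}\le1$.

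For $(\mathrm c)$ the comparator $j$ depends on the whole (possibly adaptively generated) loss sequence, so I would invoke Corollary~\ref{cor:allbound} — equivalently, Lemma~\ref{lem:fixbound} with $\alpha_{t,i}=2\gamma\,\II{i=j}$ and a union bound over arms — to get $(\mathrm c)\le\sum_t(\tloss_{t,j}-\loss_{t,j})\le\tfrac{\log(K/\delta)}{2\gamma}$. Collecting the three bounds under a union bound over the two failure events, using $2\gamma=\eta$ and the routine rearrangement $\log K+\log(K/\delta)=2\log K+\log(1/\delta)$, and finally substituting $\eta=2\gamma=\sqrt{2\log K/(KT)}$ so as to balance $\tfrac{2\log K}{\eta}$ against $\eta KT$ (each becomes $\sqrt{2KT\log K}$) produces the leading term $2\sqrt{2KT\log K}$ together with the stated $\log(2/\delta)$ tail.

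The anytime bound follows the same skeleton with two changes. First, $(\mathrm a)$ is bounded by the time-varying-rate version of the potential argument; since $(\eta_t)$ is non-increasing the entropy (comparator) contribution still telescopes to $\tfrac{\log K}{\eta_T}$ up to constants, and the resulting second-order term $\sum_t\tfrac{\eta_t}{2}\sum_i\tloss_{t,i}$ again merges with $(\mathrm b)$ into $2\sum_t\gamma_t\sum_i\tloss_{t,i}$, controlled by Lemma~\ref{lem:fixbound} with $\alpha_{t,i}=\gamma_t$ and the elementary bound $\sum_{t\le T}\gamma_t=\tfrac12\sqrt{\tfrac{\log K}{K}}\sum_{t\le T}t^{-1/2}\le\sqrt{\tfrac{T\log K}{K}}$. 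Second, $(\mathrm c)$ is handled by Lemma~\ref{lem:fixbound} with the \emph{constant} weights $\alpha_{t,i}=2\gamma_T\,\II{i=j}$, which are admissible precisely because $\gamma_t$ is non-increasing ($2\gamma_T\le2\gamma_t$ for all $t\le T$), giving $(\mathrm c)\le\tfrac{\log(K/\delta)}{2\gamma_T}$ after a union bound over $j$; this analysis is valid even though the algorithm never uses $T$. Plugging in $\eta_t=2\gamma_t=\sqrt{\log K/(Kt)}$ then yields the second displayed bound. I expect no real obstacle in the fixed-horizon case — the whole force of implicit exploration is that the bias term $(\mathrm b)$ and the exponential-weights variance term both have exactly the form $\sum_t\sum_i\alpha_{t,i}\tloss_{t,i}$ with $\alpha_{t,i}\le2\gamma_t$ that Lemma~\ref{lem:fixbound}'s clean supermartingale handles, so Freedman-type machinery is never needed; the only point that takes a moment's thought is the anytime treatment of $(\mathrm c)$, namely recognizing that $\gamma_T$ may be used as an admissible constant weight without compromising horizon-freeness, together with tracking constants carefully enough to land on $2\sqrt2$ and $4$ in the leading terms.
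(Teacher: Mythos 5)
Your proof is correct and follows essentially the same route as the paper's: the identity $\sum_i p_{t,i}\tloss_{t,i}=\loss_{t,I_t}-\gamma_t\sum_i\tloss_{t,i}$, the standard exponential-weights second-order bound with $p_{t,i}\tloss_{t,i}^2\le\tloss_{t,i}$, one application of Lemma~\ref{lem:fixbound} to the combined variance-plus-bias term and one (with weights $2\gamma_T\II{i=j}$ and a union bound over arms) to the comparator term, then a union bound and the stated tuning. The only difference is that you invoke Lemma~\ref{lem:fixbound} with $\alpha_{t,i}=\gamma_t$ rather than $\eta_t/2+\gamma_t=2\gamma_t$, costing an extra $\log(2/\delta)$, which still fits inside the theorem's stated coefficients since $KT\ge 2\log K$.
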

\begin{proof}
 Let us fix an arbitrary $\delta'\in(0,1)$.
 Following the  standard analysis of \exph in the loss game and nonincreasing learning rates \citep{bubeck12survey}, we 
can obtain the 
bound
 \[
  \sum_{t=1}^T \pa{\sum_{i=1}^K p_{t,i} \tloss_{t,i} - \tloss_{t,j}} \le \frac{\log K}{\eta_T} + 
  \sum_{t=1}^T  \frac{\eta_t}{2} \sum_{i=1}^K p_{t,i} \pa{\tloss_{t,i}}^2
 \]
 for any $j$.
 Now observe that
 \begin{equation}\label{eq:plossbound}
  \begin{split}
   \sum_{i=1}^K p_{t,i} \tloss_{t,i} 
   &= \sum_{i=1}^K \II{I_t = i} \frac{\loss_{t,i} \pa{p_{t,i} + \gamma_t }}{p_{t,i} + \gamma_t } - 
   \gamma_t \sum_{i=1}^K \II{I_t = i} \frac{\loss_{t,i}}{p_{t,i} + \gamma_t \loss_{t,i}}
   = 
   \loss_{t,I_t} - \gamma_t \sum_{i=1}^K \tloss_{t,i}.
  \end{split}
 \end{equation}
 Similarly, $\sum_{i=1}^K p_{t,i} \tloss_{t,i}^2 \le \sum_{i=1}^K \tloss_{t,i}$ holds by the boundedness of the losses.
 Thus, we get that
 \[
 \begin{split}
  \sum_{t=1}^T \pa{\loss_{t,I_t} - \loss_{t,j}} \le& \sum_{t=1}^T \pa{\loss_{t,j} - \tloss_{t,j}} + \frac{\log 
K}{\eta_T} + 
\sum_{t=1}^T \pa{\frac{\eta_t}{2} + \gamma_t} \sum_{i=1}^K \tloss_{t,i}
\\
\le& \frac{\log\pa{K/\delta'}}{2\gamma} +\frac{\log K}{\eta} + \sum_{t=1}^T \pa{\frac{\eta_t}{2}
+ \gamma_t} \sum_{i=1}^K \loss_{t,i} +  \log\pa{1/\delta'}
 \end{split}
 \]
 holds with probability at least $1-2 \delta'$, where the last line follows from an application of
Lemma~\ref{lem:fixbound} with $\alpha_{t,i} = \eta_t/2 + \gamma_t$ for all $t,i$ and taking the union bound.
By taking $j = \argmin_i L_{T,i}$ and $\delta' = \delta/2$, and using the boundedness of the losses, we obtain
 \[
  R_T \le \frac{\log\pa{2K/\delta}}{2\gamma_T}+ \frac{\log K}{\eta_T}  + K\sum_{t=1}^T \pa{\frac{\eta_t}{2}
+ \gamma_t} + \log\pa{2/\delta}.
 \]
 The statements of the theorem then follow immediately, noting that $\sum_{t=1}^T 1/\sqrt{t}\le 2\sqrt{T}$.
\end{proof}

\subsection{Bandits with expert advice}\label{sec:exp4}
We now turn to the setting of multi-armed bandits with expert advice, as defined in \citet{auer2002bandit}, and later 
revisited by \citet{McMaStre09} and \citet{BLLRS11}. In this setting, we assume that in every round $t=1,2,\dots,T$, 
the learner observes a set of $N$ probability distributions $\bxp_{t}(1),\bxp_{t}(2),\dots,\bxp_{t}(N)\in[0,1]^K$ over 
the $K$ arms, such that $\sum_{i=1}^K \xp_{t,i}(n) = 1$ for all $n\in[N]$. We assume that the sequences 
$\pa{\bxp_t(n)}$ are measurable with respect to $\pa{\F_{t}}$. The $n$\th of these vectors represent the 
probabilistic advice of the corresponding $n$\th~\emph{expert}. The goal of the learner in this setting is to pick a 
sequence of arms so as to minimize the regret against the best expert:
\[
 R_T^{\xp} = \sum_{t=1}^T \loss_{t,I_t} - \min_{n\in[N]} \sum_{t=1}^T \sum_{i=1}^K \xp_{t,i}(n)\loss_{t,i} \ra \min.
\]
To tackle this problem, we propose a modification of the \expn algorithm of 
\citet{auer2002bandit} that uses the IX loss estimates~\eqref{eq:ix}, and also 
drops the explicit exploration component of the original algorithm. Specifically, \expnix uses the loss estimates 
defined in Equation~\eqref{eq:ix} to compute the weights
\[
 w_{t,n} = \exp\pa{-\eta \sum_{s=1}^{t-1} \sum_{i=1}^K \xp_{s,i}(n)\tloss_{s,i}}
\]
for every expert $n\in[N]$, and then draw arm $i$ with probability $p_{t,i}\propto{\sum_{n=1}^N
w_{t,n} \xp_{t,i}(n)}$.
We now state the performance guarantee of \expnix. Our bound improves the best known 
leading constant of $6$ due to \citet{BLLRS11} to $2\sqrt{2}$ and is a factor of $2$ worse
than the best known constant in the pseudo-regret bound for \expn \cite{bubeck12survey}. The proof of the theorem is
presented in the Appendix.
\begin{theorem}\label{thm:experts}
Fix an arbitrary $\delta>0$ and set $\eta = 2\gamma = \sqrt{\frac{2\log N}{KT}}$ for all $t$. Then, with 
probability at least $1-\delta$, the regret of \expnix satisfies
 \[
  R_T^\xi \le 2\sqrt{2KT\log N} + \pa{\sqrt{\frac{2KT}{\log N}} +1 }\log\pa{2/\delta}.
 \]
\end{theorem}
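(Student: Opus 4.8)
\emph{Proof plan.} The plan is to mirror the proof of Theorem~\ref{thm:main}, replacing the $K$-armed exponential-weights bound by the standard analysis of \expn (i.e.\ \hedge run over the $N$ experts) applied to the \emph{estimated expert losses} $\wt y_{t,n} = \sum_{i=1}^K \xp_{t,i}(n)\tloss_{t,i}$. Writing $q_{t,n} = w_{t,n}/\sum_{j=1}^N w_{t,j}$ for the induced distribution over experts, the standard loss-game analysis with a constant learning rate $\eta$ gives, for every fixed expert $m$,
\[
 \sum_{t=1}^T \pa{\sum_{n=1}^N q_{t,n}\wt y_{t,n} - \wt y_{t,m}} \le \frac{\log N}{\eta} + \frac{\eta}{2}\sum_{t=1}^T \sum_{n=1}^N q_{t,n}\wt y_{t,n}^2 .
\]

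The next step translates the expert-level quantities back to arm-level ones. By construction $p_{t,i} = \sum_{n=1}^N q_{t,n}\xp_{t,i}(n)$, so $\sum_n q_{t,n}\wt y_{t,n} = \sum_i p_{t,i}\tloss_{t,i}$, which by identity~\eqref{eq:plossbound} equals $\loss_{t,I_t} - \gamma\sum_i\tloss_{t,i}$. For the second-order term I would invoke Jensen's inequality (convexity of $z\mapsto z^2$, using that each $\bxp_t(n)$ is a probability vector) to get $\sum_n q_{t,n}\wt y_{t,n}^2 \le \sum_n q_{t,n}\sum_i \xp_{t,i}(n)\tloss_{t,i}^2 = \sum_i p_{t,i}\tloss_{t,i}^2 \le \sum_i \tloss_{t,i}$, the last step because $p_{t,i}\tloss_{t,i}\le\loss_{t,i}\le1$. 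Substituting and rearranging yields, for every expert $m$,
\[
 \sum_{t=1}^T \loss_{t,I_t} - \sum_{t=1}^T \wt y_{t,m} \le \frac{\log N}{\eta} + \sum_{t=1}^T\pa{\frac{\eta}{2}+\gamma}\sum_{i=1}^K \tloss_{t,i} .
\]

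Now I would apply Lemma~\ref{lem:fixbound} twice, with failure probability $\delta/2$ each. First, with $\alpha_{t,i} = \frac{\eta}{2}+\gamma$: since $\eta = 2\gamma$ this equals $2\gamma$, so the hypothesis $\alpha_{t,i}\le 2\gamma_t$ holds with equality and the lemma replaces $\sum_t(\frac\eta2+\gamma)\sum_i\tloss_{t,i}$ by $(\frac\eta2+\gamma)KT + \log(2/\delta)$ (bounding $\sum_i\loss_{t,i}\le K$). Second, to relate $\wt y_{t,m}$ to the true cumulative loss of expert $m$, I would apply Lemma~\ref{lem:fixbound} with $\alpha_{t,i} = 2\gamma\,\xp_{t,i}(m)$ — legitimate because $\xp_{t,i}(m)\in[0,1]$ and the advice, being available before $I_t$ is drawn, is $\F_{t-1}$-measurable — followed by a union bound over the $N$ experts, giving $\sum_t\sum_i \xp_{t,i}(m)\pa{\tloss_{t,i}-\loss_{t,i}} \le \frac{\log(2N/\delta)}{2\gamma}$ simultaneously for all $m$. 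Taking $m$ to be the best expert and intersecting the two events (total failure probability at most $\delta$) gives
\[
 R_T^\xi \le \frac{\log N}{\eta} + \pa{\frac{\eta}{2}+\gamma}KT + \frac{\log(2N/\delta)}{2\gamma} + \log(2/\delta) ,
\]
and substituting $\eta = 2\gamma = \sqrt{2\log N/(KT)}$ and collecting terms (the $\log N$ piece of $\log(2N/\delta)$ combining with $\log N/\eta$ and $\eta KT$ into $2\sqrt{2KT\log N}$, the rest into the $\log(2/\delta)$ term) finishes the proof.

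There is no serious obstacle here: the argument is a near-verbatim adaptation of the proof of Theorem~\ref{thm:main}, and the only two points that need care are (i) passing through Jensen's inequality so that the squared mixture $\sum_n q_{t,n}\wt y_{t,n}^2$ is controlled by $\sum_i p_{t,i}\tloss_{t,i}^2$ rather than blowing up, and (ii) choosing the \emph{scaled} weights $\alpha_{t,i}=2\gamma\,\xp_{t,i}(m)$ in Lemma~\ref{lem:fixbound} so that the $\alpha_{t,i}\le 2\gamma_t$ constraint holds, at the cost of the factor $1/(2\gamma)$ and a $\log N$ from the union bound. Everything else — the \hedge regret inequality, identity~\eqref{eq:plossbound}, and the boundedness estimates — carries over directly.
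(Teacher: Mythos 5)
Your proposal is correct and follows essentially the same route as the paper's own proof: the \hedge-over-experts bound for the estimated expert losses, Jensen's inequality plus $p_{t,i}\tloss_{t,i}\le 1$ to reduce the second-order term to $\sum_i\tloss_{t,i}$, identity~\eqref{eq:plossbound}, and two applications of Lemma~\ref{lem:fixbound} (with $\alpha_{t,i}=2\gamma\,\xp_{t,i}(m)$ plus a union bound over the $N$ experts, and with $\alpha_{t,i}=\eta/2+\gamma$), followed by the union bound with $\delta'=\delta/2$ and substitution of the parameters. The final bookkeeping even comes out slightly tighter than the stated bound, so nothing is missing.
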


\subsection{Tracking the best sequence of arms}\label{sec:exp3s}
In this section, we consider the problem of competing with sequences of actions. Similarly to 
\citet{HW98}, we consider the class of sequences that switch at most $S$ times between actions. 
We measure the performance of the learner in this setting in terms of the regret against the best sequence from this 
class $C(S)\subseteq[K]^T$, defined as
\[
 R_T^S= \sum_{t=1}^T \loss_{t,I_t} - \min_{\pa{J_t}\in C(S)} \sum_{t=1}^T \loss_{t,J_t}.
\]
%
Similarly to \citet{auer2002bandit}, we now propose to adapt the Fixed Share algorithm of \citet{HW98} to our setting. 
Our algorithm, called \exphsix, updates a set of weights $w_{t,\cdot}$ over the arms in a recursive fashion. In the 
first round, \exphsix sets $w_{1,i} = 1/K$ for all $i$. In the following rounds, the weights are updated for every arm 
$i$ as
\[
 w_{t+1,i} = (1-\alpha) w_{t,i} \cdot e^{-\eta \tloss_{t,i}} + \frac{\alpha}{K} \sum_{j=1}^K w_{t,j} \cdot e^{-\eta 
\tloss_{t,j}}.
\]
In round $t$, the algorithm draws arm $I_t = i$ with probability $p_{t,i} \propto w_{t,i}$.
Below, we give the performance guarantees of \exphsix. Note that 
our leading factor of $2\sqrt{2}$ again improves over the best previously known leading factor of $7$, shown by 
\citet{AB09}. The proof of the theorem is given in the Appendix.
\begin{theorem}\label{thm:tracking}
 Fix an arbitrary $\delta>0$ and set $\eta = 2\gamma = \sqrt{\frac{2\bar{S}\log K }{KT}}$ and 
$\alpha = \frac{S}{T-1}$, where $\bar{S} = S + 1$. Then, with probability at least $1-\delta$, the regret of \exphsix 
satisfies
 \[
  R_T^S \le 2\sqrt{2KT\bar{S}\log \pa{\frac{eKT}{S}}} + \pa{\sqrt{\frac{2KT}{\bar{S}\log K}} + 1} \log\pa{2/\delta}.
 \]
\end{theorem}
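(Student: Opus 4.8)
The plan is to follow the proof of Theorem~\ref{thm:main} almost line by line, replacing the \exph regret bound by the corresponding bound for the Fixed Share update. First I would run the standard weight‑telescoping argument. Writing $W_t=\sum_i w_{t,i}$, the $\alpha$‑mixing step of \exphsix is built exactly so that $W_{t+1}=\sum_i w_{t,i}e^{-\eta\tloss_{t,i}}=W_t\sum_i p_{t,i}e^{-\eta\tloss_{t,i}}$; since $\tloss_{t,i}\ge 0$ we may use $e^{-x}\le 1-x+x^2/2$ together with $\log(1+z)\le z$ to get $\log W_{T+1}\le -\eta\sum_{t}\sum_i p_{t,i}\tloss_{t,i}+\frac{\eta^2}{2}\sum_t\sum_i p_{t,i}\tloss_{t,i}^2$, using $W_1=1$. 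On the other side, pushing the weight mass along any fixed reference sequence $(J_t)\in C(S)$ with at most $S$ switches through the recursion (the usual Fixed Share calculation of \citet{HW98}) gives $\log W_{T+1}\ge -\eta\sum_t\tloss_{t,J_t}-\bigl(\log K+S\log\frac{e(K-1)(T-1)}{S}\bigr)$, where the choice $\alpha=S/(T-1)$ is exactly what balances the switching penalty against the staying penalty $(T-1-S)\log\frac{1}{1-\alpha}\le S$, and the bracket is at most $\bar S\log\frac{eKT}{S}$. Combining the two bounds and using $\sum_i p_{t,i}\tloss_{t,i}^2\le\sum_i\tloss_{t,i}$ as in the main text yields, for every $(J_t)\in C(S)$,
\[
 \sum_{t=1}^T\sum_{i=1}^K p_{t,i}\tloss_{t,i}-\sum_{t=1}^T\tloss_{t,J_t}\le\frac{\bar S\log(eKT/S)}{\eta}+\frac{\eta}{2}\sum_{t=1}^T\sum_{i=1}^K\tloss_{t,i}.
\]

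Next I would invoke Equation~\eqref{eq:plossbound}, which applies verbatim because \exphsix also draws $I_t\sim\bp_t$: $\sum_i p_{t,i}\tloss_{t,i}=\loss_{t,I_t}-\gamma\sum_i\tloss_{t,i}$. Substituting and rearranging, then splitting $\tloss_{t,J_t}=\loss_{t,J_t}+(\tloss_{t,J_t}-\loss_{t,J_t})$, gives
\[
 \sum_{t=1}^T\bigl(\loss_{t,I_t}-\loss_{t,J_t}\bigr)\le\sum_{t=1}^T\bigl(\tloss_{t,J_t}-\loss_{t,J_t}\bigr)+\frac{\bar S\log(eKT/S)}{\eta}+\Bigl(\frac{\eta}{2}+\gamma\Bigr)\sum_{t=1}^T\sum_{i=1}^K\tloss_{t,i}.
\]
The two estimation terms are then handled by Lemma~\ref{lem:fixbound}, with $\delta'=\delta/2$ in each. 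For the second‑order term I would use the constant weights $\alpha_{t,i}=\eta/2+\gamma$; the admissibility requirement $\alpha_{t,i}\le 2\gamma$ is precisely why the theorem sets $\eta=2\gamma$, and the lemma then gives $(\eta/2+\gamma)\sum_{t,i}\tloss_{t,i}\le(\eta/2+\gamma)\sum_{t,i}\loss_{t,i}+\log(1/\delta')\le(\eta/2+\gamma)KT+\log(1/\delta')$ by $\loss_{t,i}\le 1$. For the comparator term I would take $(J_t)$ to be the minimizing sequence $(J_t^\ast)$ and apply the lemma with $\alpha_{t,i}=2\gamma\II{J_t^\ast=i}$ (again admissible), giving $\sum_t(\tloss_{t,J_t^\ast}-\loss_{t,J_t^\ast})\le\frac{1}{2\gamma}\log(1/\delta')$; a union bound over the two events gives a statement that holds with probability at least $1-\delta$.

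It then only remains to substitute $\eta=2\gamma=\sqrt{2\bar S\log K/(KT)}$ and $\alpha=S/(T-1)$ and collect terms: $\bar S\log(eKT/S)/\eta+(\eta/2)KT$ produces the leading $2\sqrt{2KT\bar S\log(eKT/S)}$ up to the customary AM–GM bookkeeping, while $\frac{1}{2\gamma}\log(1/\delta')+\log(1/\delta')=(\sqrt{KT/(2\bar S\log K)}+1)\log(2/\delta)$ gives the lower‑order term. I expect the one genuinely delicate ingredient to be the Fixed Share lower bound on $\log W_{T+1}$ and the resulting complexity term $\log K+S\log\frac{e(K-1)(T-1)}{S}$: this is the only place where the recursive structure of \exphsix — rather than the IX estimator — does real work, and where the specific setting $\alpha=S/(T-1)$ must be exploited; by contrast, everything involving the loss estimates is discharged in one shot by Lemma~\ref{lem:fixbound}, exactly as in the proof of Theorem~\ref{thm:main}. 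A secondary point to be careful about is that the optimal comparator $(J_t^\ast)$ is data‑dependent, so strictly speaking the application of Lemma~\ref{lem:fixbound} to $\alpha_{t,i}=2\gamma\II{J_t^\ast=i}$ requires either an oblivious‑adversary assumption or an extra union bound over $C(S)$, whose cost does not change the form of the claimed bound.
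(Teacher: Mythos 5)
Your proposal follows the paper's proof essentially step by step: a Fixed Share (\exphs-style) bound for the estimated losses --- which the paper imports as a modification of Theorem~2 of \citet{CBGLS12}, with precisely the replacement $e^{-z}\le 1-z+z^2/2$ that you use (Hoeffding's lemma is unavailable because the IX estimates are unbounded above) --- followed by Equation~\eqref{eq:plossbound}, the bound $\sum_i p_{t,i}\tloss_{t,i}^2\le\sum_i\tloss_{t,i}$, and applications of Lemma~\ref{lem:fixbound} with $\alpha_{t,i}=\eta/2+\gamma$ and with indicator weights along the comparator. The one substantive deviation is your treatment of the comparator term, and there your main line is not valid as written: the optimal sequence $(J_t^*)$ is determined by the entire loss sequence (and, for the adaptive adversaries the paper allows, by the learner's own actions), so $\alpha_{t,i}=2\gamma\II{J_t^*=i}$ is not $\F_{t-1}$-measurable and Lemma~\ref{lem:fixbound} does not apply to it directly. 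Your fallback is the right one --- obliviousness would make $J^*$ deterministic, but the theorem carries no such assumption, so the paper instead applies the lemma to every fixed sequence in $C(S)$ and union-bounds, using $|C(S)|\le K^{\bar{S}}\pa{eT/S}^S$. Note, however, that this is not a cosmetic correction as your closing remark suggests: the resulting term $\log\bpa{|C(S)|/\delta'}/(2\gamma)\le\bpa{\bar{S}\log K+S\log(eT/S)+\log(1/\delta')}/(2\gamma)$ is of the same order as your Fixed Share complexity term (since $2\gamma=\eta$), so it must be folded into the leading $2\sqrt{2KT\bar{S}\log(eKT/S)}$ term rather than into the $\log(2/\delta)$ remainder --- it is essentially why the leading constant here is $2\sqrt{2}$ --- and your final bookkeeping, which charges only $\log(1/\delta')/(2\gamma)$ for this step, has to be redone accordingly, after which it coincides with the paper's accounting.
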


\subsection{Bandits with side-observations}\label{sec:exp3ix}
Let us now turn to the problem of online learning in bandit problems in the presence of side observations, as defined
by \citet{MS11} and later elaborated by \citet{ACGM13}. In this setting, the learner and the environment
interact exactly as in the multi-armed bandit problem, the main difference being that in every round, the learner
observes the losses of some arms other than its actually chosen arm $I_t$. The structure of the side observations is 
described by the directed graph $G$: nodes of $G$ correspond to individual arms, and the presence of
arc $i\ra j$ implies that the learner will observe $\loss_{t,j}$ upon selecting $I_t = i$. 

Implicit exploration and \exphix was first proposed by \citet{KNVM14} for this precise setting.
To describe this variant, let us introduce the notations $O_{t,i} = 
\II{I_t=i} + \II{(I_t\ra i)\in G}$ and $o_{t,i} = \EEcc{O_{t,i}}{\F_{t-1}}$.
Then, the IX loss estimates in this setting are defined for all $t,i$ as $ \tloss_{t,i}
= \frac{O_{t,i}\loss_{t,i}}{o_{t,i} + \gamma_t}$.
With these estimates at hand, \exphix draws arm $I_t$ from the exponentially weighted distribution defined in
Equation~\eqref{eq:exph}. The following theorem provides the regret bound concerning this algorithm.
\begin{theorem}\label{thm:sideobs}
 Fix an arbitrary $\delta>0$. Assume that $T\ge K^2/(8\alpha)$ and set $\eta = 2\gamma = \sqrt{\frac{\log 
K}{2\alpha T \log(KT)}}$, where 
$\alpha$ is the \emph{independence number} of $G$. With  probability at least $1-\delta$, \exphix guarantees
 \[
 \begin{split}
  R_T \le \pa{4 \!+\! 2\sqrt{\log\pa{4/\delta}}}\!\cdot\!\sqrt{2\alpha T \!\pa{\log^2\!\! K \!+\! \log KT}} \!+\! 
2\sqrt{\frac{\alpha T\log(KT)}{\log K}}\log\pa{4/\delta} \!+\! \sqrt{\frac{T\log(4/\delta)}{2}}.
 \end{split}
 \]
\
\end{theorem}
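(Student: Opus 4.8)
The plan is to mimic the proof of Theorem~\ref{thm:main}, adding two ingredients specific to the graph setting: a combinatorial lemma (as in \citet{KNVM14}) that bounds the sums $\sum_i p_{t,i}/(o_{t,i}+\gamma)$ in terms of the independence number $\alpha$, and a one-sided Bernstein-type martingale bound for the bias of the IX estimates, which in the plain bandit case was the exact identity of Equation~\eqref{eq:plossbound} but here carries slack coming from the side observations. Fixing $j^*\in\argmin_i\sum_t\loss_{t,i}$, I would start from the decomposition
\[
 R_T = \underbrace{\sum_t\pa{\loss_{t,I_t} - \sum_i p_{t,i}\loss_{t,i}}}_{(A)} + \underbrace{\sum_t\sum_i p_{t,i}\pa{\loss_{t,i} - \tloss_{t,i}}}_{(B)} + \underbrace{\sum_t\pa{\sum_i p_{t,i}\tloss_{t,i} - \tloss_{t,j^*}}}_{(C)} + \underbrace{\sum_t\pa{\tloss_{t,j^*} - \loss_{t,j^*}}}_{(D)},
\]
and bound each of the four terms on an event of probability $1-\delta/4$; the union bound produces the $\log(4/\delta)$ factors in the statement.

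Terms $(A)$ and $(D)$ are the easy ones. Term $(A)$ is a sum of martingale differences with values in $[-1,1]$, so Hoeffding--Azuma gives $(A)\le\sqrt{(T/2)\log(4/\delta)}$, the last summand in the bound. Term $(D)$ is at most $\log(K/\delta)/(2\gamma)$ by Corollary~\ref{cor:allbound}, whose proof applies verbatim to the side-observation estimates: it only uses $\tloss_{t,i}\le\tfrac1\beta\log(1+\beta\hloss_{t,i})$ and $\EEcc{\hloss_{t,i}}{\F_{t-1}}\le\loss_{t,i}$, which hold for the unbiased estimate $\hloss_{t,i}=O_{t,i}\loss_{t,i}/o_{t,i}$ exactly as for the bandit estimates.

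For term $(C)$, the standard \exph loss-game analysis gives $(C)\le\frac{\log K}{\eta}+\frac\eta2\sum_t\sum_i p_{t,i}\tloss_{t,i}^2$, and $O_{t,i}^2=O_{t,i}$ together with $\loss_{t,i}\le1$ yields the pathwise bound $p_{t,i}\tloss_{t,i}^2\le\frac{p_{t,i}}{o_{t,i}+\gamma}\tloss_{t,i}$. Since $\alpha_{t,i}:=\frac\eta2\cdot\frac{p_{t,i}}{o_{t,i}+\gamma}$ is $\F_{t-1}$-measurable and at most $\frac\eta2=\gamma\le2\gamma$, Lemma~\ref{lem:fixbound} gives, with probability $1-\delta/4$, $\frac\eta2\sum_t\sum_i p_{t,i}\tloss_{t,i}^2\le\log(4/\delta)+\frac\eta2\sum_t\sum_i\frac{p_{t,i}}{o_{t,i}+\gamma}$, and the combinatorial lemma bounds each inner sum by $2\alpha\log\pa{1+\tfrac{\lceil K^2/\gamma\rceil+K}{\alpha}}$. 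The assumption $T\ge K^2/(8\alpha)$ is exactly what is needed to reduce this logarithmic factor to the $\log^2 K+\log KT$ appearing in the bound.

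The crux is term $(B)$. Writing $\loss_{t,i}-\tloss_{t,i}=\frac{(o_{t,i}-O_{t,i})\loss_{t,i}}{o_{t,i}+\gamma}+\frac{\gamma\loss_{t,i}}{o_{t,i}+\gamma}$, the second piece is $\F_{t-1}$-measurable and, summed over $i$ and $t$, is at most $\gamma T$ times the combinatorial quantity above. The first piece, summed over $i$, is a martingale difference $M_t$ with $\EEcc{M_t}{\F_{t-1}}=0$ and $M_t\le\sum_i\frac{p_{t,i}o_{t,i}\loss_{t,i}}{o_{t,i}+\gamma}\le\sum_i p_{t,i}\loss_{t,i}\le1$ almost surely; the key estimate is that, after pulling $\sum_i p_{t,i}/(o_{t,i}+\gamma)$ out of the second moment, one obtains $\EEcc{M_t^2}{\F_{t-1}}\le\sum_i\frac{p_{t,i}}{o_{t,i}+\gamma}$, so the conditional variance is controlled by the \emph{same} $O(\alpha\log(KT))$ quantity as in $(C)$. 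A one-sided Bernstein-type bound for the upper tail of a martingale whose increments are a.s.\ bounded above and whose conditional variances admit a deterministic bound $V$ --- provable by the same elementary exponential-supermartingale argument as Lemma~\ref{lem:fixbound}, without Freedman's inequality --- then yields $\sum_t M_t\le\sqrt{2VT\log(4/\delta)}+\tfrac13\log(4/\delta)$. I expect establishing this variance bound (in particular, the step controlling the \emph{variance} of $M_t$, not merely its range, by the independence number) to be the main obstacle; once it is in place, a union bound over the four events, substitution of $\eta=2\gamma=\sqrt{\log K/(2\alpha T\log(KT))}$, and collecting the logarithmic factors using $T\ge K^2/(8\alpha)$ assembles the claimed inequality.
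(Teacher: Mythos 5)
Your overall architecture matches the paper's proof: the same four-way decomposition (Hoeffding--Azuma for the sampled losses, a martingale term with conditional variance bounded by $Q_t=\sum_i p_{t,i}/(o_{t,i}+\gamma)$, the standard exponential-weights bound with $p_{t,i}\tloss_{t,i}^2\le \frac{p_{t,i}}{o_{t,i}+\gamma}\tloss_{t,i}$, and a per-arm lower-confidence bound for the comparator), plus Lemma~1 of \citet{KNVM14} to control $Q_t$ by $O(\alpha\log(KT))$. Your variance computation for the bias martingale is exactly the paper's ($\sigma_t^2\le Q_t$, using $\mathrm{Var}\le$ second moment, $O_{t,j}\le 1$ and $\loss_{t,i}\le 1$), and replacing the cited Freedman inequality by a self-contained one-sided Bernstein supermartingale bound is a legitimate, essentially equivalent variant (the paper simply invokes Freedman with a free parameter $\omega\le 1/K$ and then optimizes $\omega$ against the deterministic bound on $\sum_t Q_t$).

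There is, however, one genuine gap, in your treatment of term $(C)$: you apply Lemma~\ref{lem:fixbound} with $\alpha_{t,i}=\frac{\eta}{2}\cdot\frac{p_{t,i}}{o_{t,i}+\gamma}$ to the side-observation estimates, but Lemma~\ref{lem:fixbound} does \emph{not} hold for these estimates. Its proof hinges on the identity $\prod_{i}\bpa{1+\alpha_{t,i}\hloss_{t,i}}=1+\sum_i\alpha_{t,i}\hloss_{t,i}$, which is valid only because in the plain bandit case at most one $\hloss_{t,i}$ is nonzero per round; with side observations several $O_{t,i}$ can equal $1$ simultaneously, the cross terms do not vanish, and the lemma (as stated, for the estimates of Equation~\eqref{eq:ix}) simply does not cover these estimates. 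What survives is only the single-coordinate (weighted Corollary~\ref{cor:allbound}-type) bound, applied separately to each arm $i$ with weights $\frac{p_{t,i}}{o_{t,i}+\gamma}\cdot 2\gamma\le 2\gamma$ and a union bound over $i$; this is what the paper does, and it costs a factor $K$: the additive term becomes $\frac{\eta}{2}\cdot\frac{K\log(4K/\delta)}{2\gamma}=\frac{K\log(4K/\delta)}{2}$ rather than your $\log(4/\delta)$. This also corrects your reading of the assumption $T\ge K^2/(8\alpha)$: it is not needed to tame the graph-theoretic logarithm (the parameter choice already yields $\alpha'\le 2\alpha\log(KT)$), but precisely to absorb this residual $\frac{K\log(4K/\delta)}{2}$ term into the $\sqrt{\alpha T}$-order terms of the final bound. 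With the per-arm union-bound fix in place of the invalid invocation of Lemma~\ref{lem:fixbound}, the rest of your argument goes through and recovers the stated bound.
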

The proof of the theorem is given in the Appendix. While the proof of this statement is significantly more involved
than the other proofs presented in this paper, it provides a fundamentally new result. In particular, our bound is in
terms of the \emph{independence number} $\alpha$ and thus matches the minimax regret bound proved by \citet{ACGM13} for
this setting up to logarithmic factors. In contrast, the only high-probability regret bound for this setting due 
to \citet{ACGMMS14} scales with the size $m$ of the maximal acyclic subgraph of $G$, which can be 
much larger than $\alpha$ in general (i.e., $m$ may be $o(\alpha)$ for some graphs \cite{ACGM13}).

\vspace{-.2cm}
\section{Empirical evaluation}\label{sec:exp}
\vspace{-.2cm}
We conduct a simple experiment to demonstrate the robustness of \exphix as compared 
to \exph and its superior performance as compared to \exphp. Our setting is a 10-arm bandit problem where all losses are 
independent draws of Bernoulli random variables. The mean losses of arms 1 through 8 are $1/2$ and the mean loss of arm 
9 is $1/2 - \Delta$ for all rounds $t=1,2,\dots,T$. The mean losses of arm 10 are changing over time: for rounds 
$t\le T/2$, the mean is $1/2 + \Delta$, and $1/2 - 4\Delta$ afterwards. This choice ensures that up to at least 
round $T/2$, arm 9 is clearly better than other arms. In the second half of the game, arm 10 starts to outperform arm 9
 and eventually becomes the leader.

\begin{figure}
\begin{center}
\includegraphics[width=.49\columnwidth]{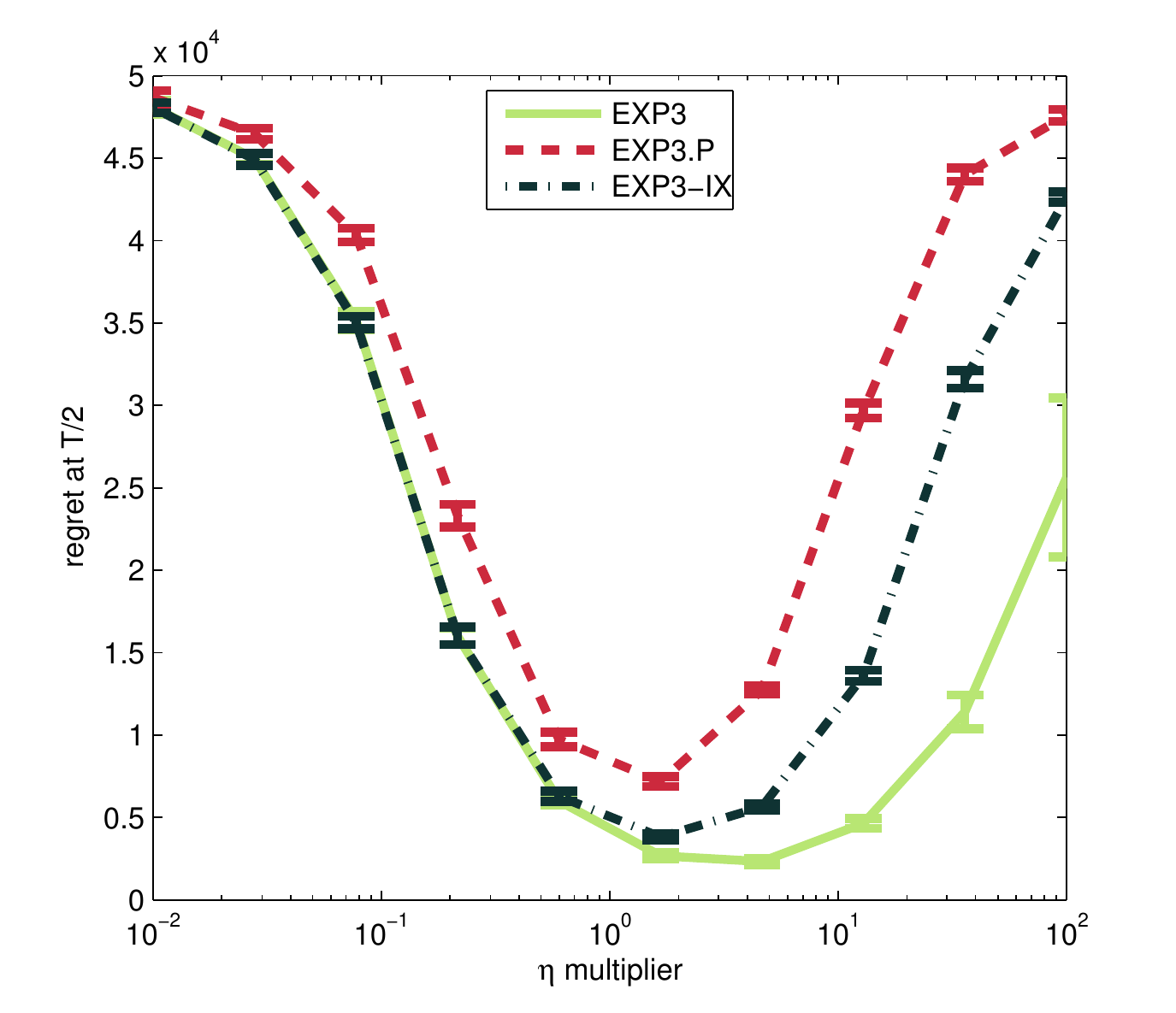}
\includegraphics[width=.49\columnwidth]{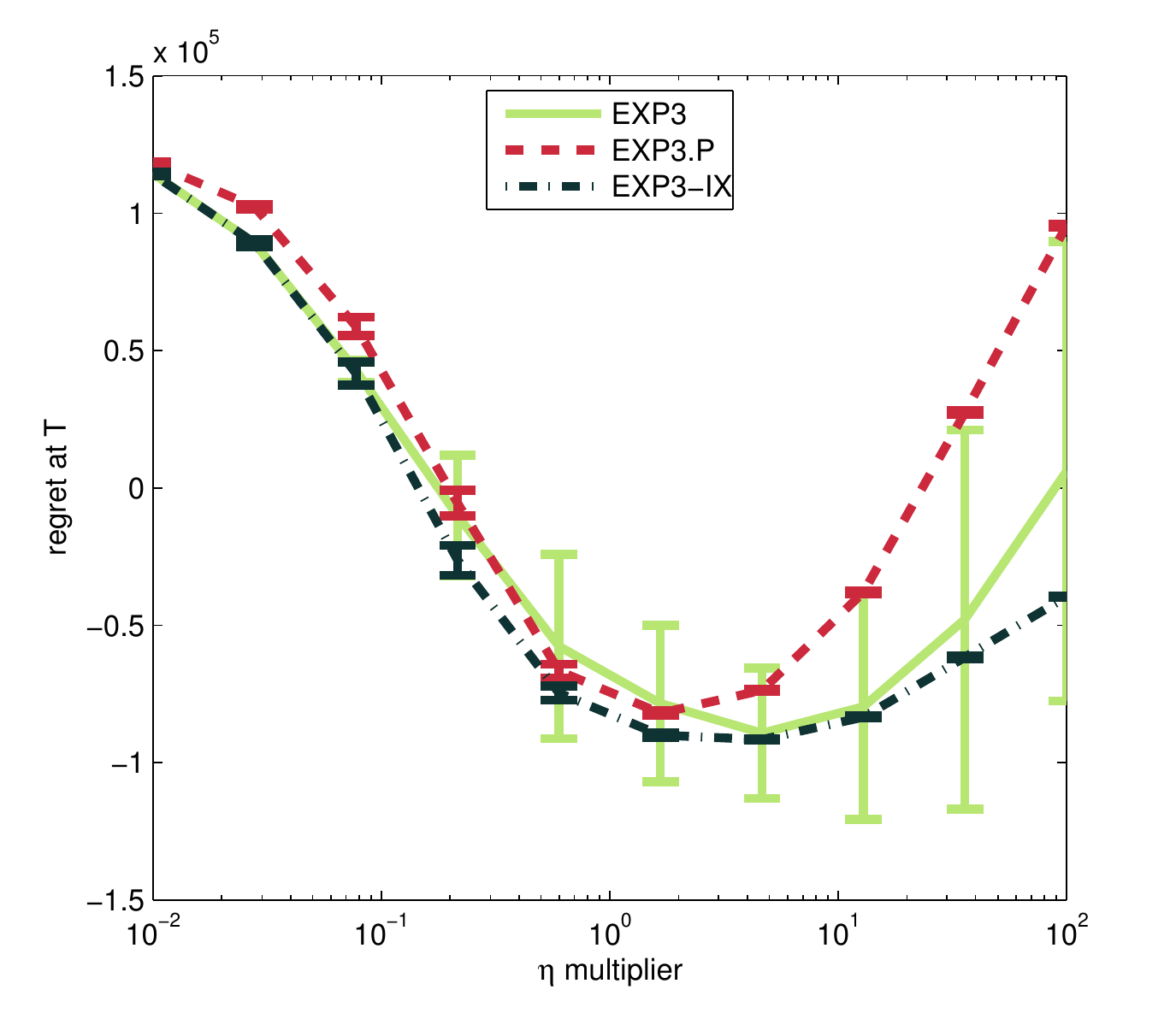}
\end{center}
\caption{Regret of \exph, \exphp, and \exphix, respectively in the problem described in Section~\ref{sec:exp}.}
\label{fig:exp}
\end{figure}

We have evaluated the performance of \exph, \exphp and \exphix in the above setting with $T = 10^6$ and 
$\Delta = 0.1$. For fairness of comparison, we evaluate all three algorithms for a wide range of parameters. 
In particular, for all three algorithms, we set a base learning rate $\eta$ according to the best known theoretical 
results \citep[Theorems~3.1 and~3.3]{bubeck12survey} and varied the multiplier of the respective base parameters 
between $0.01$ and $100$. Other parameters are set as $\gamma=\eta/2$ for \exphix and $\beta = \gamma/K = \eta$ for 
\exphp. We studied the regret up to two interesting 
rounds in the game: up to $T/2$, where the losses are i.i.d., and up to $T$ where the algorithms have to notice the 
shift in the loss distributions. Figure~\ref{fig:exp} shows the empirical means and standard deviations over 50 
runs of the regrets of the three algorithms as a function of the multipliers. 
The results clearly show that \exphix largely improves on the empirical performance of \exphp and is also much more 
robust in the non-stochastic regime than vanilla \exph. 
%

\vspace{-.3cm}
\section{Discussion}
\vspace{-.3cm}
In this paper, we have shown that, contrary to popular belief, explicit exploration is not necessary to achieve 
high-probability regret bounds for non-stochastic bandit problems. Interestingly, however, we have observed in several 
of our experiments that our IX-based algorithms still draw every arm roughly $\sqrt{T}$ times, even though this is not 
explicitly enforced by the algorithm. This suggests a need for a more complete study of 
the role of exploration, to find out whether pulling every single arm $\Omega(\sqrt{T})$ times is necessary for 
achieving near-optimal guarantees. 

One can argue that tuning the IX parameter that we introduce may actually be just as difficult in practice as tuning 
the parameters of \exphp. However, every aspect of our analysis suggests that $\gamma_t = \eta_t/2$ is the most natural 
choice for these parameters, and thus this is the choice that we recommend.
One limitation of our current analysis is that it only permits deterministic learning-rate and IX 
parameters (see the 
conditions of Lemma~\ref{lem:fixbound}). That is, proving adaptive regret bounds in the vein of 
\citep{HK11,RS13,Neu15} that hold with high probability is still an open challenge.

Another interesting question for future study is whether the implicit exploration approach can help in advancing the 
state of the art in the more general setting of linear bandits. All known algorithms for this 
setting rely on explicit exploration techniques, and the strength of the obtained results depend crucially on 
the choice of the exploration distribution (see \citep{BCK12,HKM14} for recent advances). Interestingly, IX has a 
natural extension to the linear bandit problem. To see this, consider the vector $\bV_t = \be_{I_t}$ and the matrix 
$P_t=\EE{\bV_t\bV_t\transpose}$. Then, the IX loss estimates can be written as $\wt{\bloss}_t =(P_t+\gamma 
I)^{-1}\bV_t\bV_t\transpose\bloss_t$. Whether or not this estimate is the right choice for linear bandits 
remains to be seen.

Finally, we note that our estimates~\eqref{eq:ix} are certainly not the only ones that allow avoiding explicit 
exploration. In fact, the careful reader might deduce from the proof of Lemma~\ref{lem:fixbound} that the same 
concentration bound can be shown to hold for the alternative loss estimates ${\loss_{t,i}\II{I_t = i}}/\pa{p_{t,i} + 
\gamma 
\loss_{t,i}} $ and $\log\bpa{1 + 2\gamma \loss_{t,i}\II{I_t=i}/p_{t,i}}/(2\gamma)$. Actually, a variant of the 
latter estimate was used previously for proving high-probability regret bounds in the reward game by 
\citet{audibert10inf}---however, their proof still relied on explicit exploration. It is not hard to verify that all 
the results we presented in this paper (except Theorem~\ref{thm:sideobs}) can be shown to hold for the above two 
estimates, too. 

\paragraph{Acknowledgments}
This work was supported by INRIA, the French Ministry of Higher Education and Research, and by FUI project Herm\`es. 
The author wishes to thank Haipeng Luo for catching a bug in an earlier version of the paper, and the 
anonymous reviewers for their helpful suggestions.

\newpage
{\footnotesize

}


\newpage
\appendix
\section{The proof of Lemma~\ref{lem:fixbound}}\label{sec:proof}
 Fix any $t$. For convenience, we will use the notation $\beta_t = 2\gamma_t$. First, observe that for any
$i$,
 \[
  \begin{split}
   \tloss_{t,i} =& \frac{\loss_{t,i}}{p_{t,i} + \gamma_t} \II{I_t = i} 
   \le \frac{\loss_{t,i}}{p_{t,i} + \gamma_t \loss_{t,i}} \II{I_t = i} 
   = \frac{1}{2 \gamma_t} \cdot \frac{2 
\gamma_t \loss_{t,i} / p_{t,i}}{1 + \gamma_t \loss_{t,i} / p_{t,i}} \II{I_t = i}
   \le \frac{1}{\beta_t} \cdot \log\pa{1 + \beta_t \hloss_{t,i}},
  \end{split}
 \]
 where the first step follows from $\loss_{t,i}\in [0,1]$ and last one from the elementary inequality
$\frac{z}{1+z/2} \le \log(1+z)$ that holds for all $z\ge
0$. 

Define the notations $\wt{\lambda}_t = \sum_{i=1}^K \alpha_{t,i} \tloss_{t,i}$ and 
$\lambda_t = \sum_{i=1}^K \alpha_{t,i} \loss_{t,i}$. Using the above inequality, we get that
 \begin{equation}\label{eq:incrbound}
 \begin{split}
  \EEcc{\exp\bpa{\wt{\lambda}_{t}}}{\F_{t-1}} \le& \EEcc{\exp\pa{\sum_{i=1}^K \frac{\alpha_{t,i}}{\beta_t} \cdot 
\log\pa{1+\beta_t
\hloss_{t,i}}}}{\F_{t-1}} 
\\
\le& \EEcc{\prod_{i=1}^K \pa{1 + \alpha_{t,i} \hloss_{t,i}}}{\F_{t-1}}
= \EEcc{{1 + \sum_{i=1}^K \alpha_{t,i} \hloss_{t,i}}}{\F_{t-1}}
\\
\le&
{1 + \sum_{i=1}^K \alpha_{t,i} \loss_{t,i}} \le \exp\pa{\sum_{i=1}^K \alpha_{t,i} \loss_{t,i}} = \exp\pa{
\lambda_{t}},
 \end{split}
 \end{equation}
 where the second line follows from noting that $\alpha_{t,i}\le \beta_t$, using the inequality $x\log(1+y) \le 
\log(1+xy)$ that holds for all $y > -1$ and $x\in[0,1]$ and the identity $\prod_{i=1}^K \pa{1 + \alpha_{t,i} 
\hloss_{t,i}} = 1 + 
\sum_{i=1}^K \alpha_{t,i} \hloss_{t,i}$ that follows from the fact that $\hloss_{t,i}\cdot\hloss_{t,j} = 0$ holds 
whenever $i\neq j$. The last line is obtained by using $\EEcc{\hloss_{t,i}}{\F_{t-1}} \le \loss_{t,i}$ that holds by 
definition of $\hloss_{t,i}$, and the inequality $1+z \le e^z$ that holds for all $z\in\real$. 

As a result, the process $Z_t = \exp\bpa{\sum_{s=1}^t \bpa{\wt{\lambda}_s - \lambda_{s}}}$ is a supermartingale 
with respect to $\pa{\F_t}$: $ \EEcc{Z_t}{\F_{t-1}} \le Z_{t-1}$.
Observe that, since $Z_0 = 1$, this implies $\EE{Z_T} \le \EE{Z_{T-1}} \le \ldots \le 1$, and thus by Markov's
inequality,
\[
\begin{split}
 \PP{\sum_{t=1}^T\bpa{\wt{\lambda}_{t}  - \lambda_{t}}> \varepsilon} &\le \EE{\exp\pa{
\sum_{t=1}^T\bpa{\wt{\lambda}_{t}  - \lambda_{t}}}} \cdot \exp(-\varepsilon) \le \exp(-\varepsilon)
\end{split}
\]
holds for any $\varepsilon>0$. The statement of the lemma follows from solving $\exp(-\varepsilon) = \delta$ for 
$\varepsilon$.
\qed 

\section{Further proofs}
\subsection{The proof of Theorem~\ref{thm:experts}}
 Fix an arbitrary $\delta'$. For ease of notation, let us define $\pi_t(n) = w_{t,n}/\bpa{\sum_{m=1}^N w_{t,m}}$.
 By standard arguments (along the lines of \cite{auer2002bandit,bubeck12survey}), we can obtain
 \[
  \sum_{t=1}^T \sum_{i=1}^K \bpa{p_{t,i} - \xp_{t,i}(m)} \tloss_{t,i} \le \frac{\log K}{\eta} + 
  \frac{\eta}{2} \sum_{t=1}^T  \sum_{n=1}^N \pi_{t}(n) \pa{\sum_{i=1}^K \xp_{t,i}(n)\tloss_{t,i}}^2
 \]
 for any fixed $m\in[N]$.
 The last term on the right-hand side can be bounded as
 \[
  \begin{split}
   &\sum_{n=1}^N \pi_{t}(n) \pa{\sum_{i=1}^K \xp_{t,i}(n)\tloss_{t,i}}^2 \le
   \sum_{n=1}^N \pi_{t}(n) \sum_{i=1}^K \xp_{t,i}(n)\pa{\tloss_{t,i}}^2
   =\sum_{i=1}^K p_{t,i} \pa{\tloss_{t,i}}^2 \le \sum_{i=1}^K \tloss_{t,i},
  \end{split}
 \]
 where the first step uses Jensen's inequality and the last uses $p_{t,i}\tloss_{t,i}\le 1$.
 Now, we can apply Lemma~\ref{lem:fixbound} and the union bound to show that
 \[
  \sum_{t=1}^T \sum_{i=1}^K \xp_{t,i}(m) \pa{\tloss_{t,i} - \loss_{t,i}} \le \frac{\log\pa{N/\delta'}}{2\gamma}
 \]
holds simultaneously for all experts with probability at least $1-\delta'$, and in particular for the best expert, too.
Putting this observation together 
with the above bound and Equation~\eqref{eq:plossbound}, we get that
 \[
 \begin{split}
  R_T^{\xp} \le& \frac{\log N}{\eta} + \frac{\log\pa{N/\delta'}}{2\gamma} + 
\pa{\frac{\eta}{2}
+ \gamma} \sum_{t=1}^T \sum_{i=1}^K \tloss_{t,i}
\\
\le& \frac{\log K}{\eta} + \frac{\log\pa{N/\delta'}}{2\gamma} + \pa{\frac{\eta}{2}
+ \gamma} \sum_{i=1}^K L_{T,i} + \pa{\frac{\eta}{2} + \gamma} \frac{\log\pa{1/\delta'}}{2\gamma}
 \end{split}
 \]
 holds with probability at least $1-2\delta'$, 
 where the last line follows from Lemma~\ref{lem:fixbound} and the union bound. The proof is concluded by taking
$\delta' = \delta/2$ and plugging in the choices of $\gamma$ and $\eta$.
\qed

\subsection{The proof of Theorem~\ref{thm:tracking}}
 The proof of the theorem builds on the techniques of \citet{CBGLS12} and \citet{auer2002bandit}. Let us fix an
arbitrary $\delta'\in(0,1)$ and denote the 
best sequence from $C(S)$ by $J_{1:T}^*$. Then, a straightforward modification of 
Theorem~2 of \cite{CBGLS12} yields the bound\footnote{Proving this bound requires replacing Hoeffding's inequality in 
their Lemma~1 by the inequality $e^{-z}\le 1 - z + {z^2/2}$ that holds for all $z\ge0$.}
 \[
 \begin{split}
  \sum_{t=1}^T \pa{\sum_{i=1}^K p_{t,i} \tloss_{t,i} - \tloss_{t,J_t^*}} \le \frac{2\bar{S}\log K}{\eta} - 
\frac{1}{\eta}\log\pa{\alpha^S(1\!-\!\alpha)^{T-\bar{S}}} + 
  \frac{\eta}{2} \sum_{t=1}^T  \sum_{i=1}^K p_{t,i} \pa{\tloss_{t,i}}^2.
 \end{split}
 \]
 To proceed, let us apply Lemma~\ref{lem:fixbound} to obtain that
\[
 \sum_{t=1}^T \pa{\tloss_{t,J_t} - \loss_{t,J_t}} \le \frac{\log\bpa{\left|C(S)\right|/\delta}}{2\gamma}
\]
simultaneously holds for all sequences $J_{1:T}\in C(S)$. By standard arguments (see, e.g., the proof of Theorem 22 
in \citet{AB09}), one can show that $\left|C(S)\right|\le K^{\bar{S}}\pa{\frac{eT}{S}}^S$.
 Now, combining the above with Equation~\eqref{eq:plossbound} and $\sum_{i=1}^K p_{t,i} \tloss_{t,i}^2 \le 
\sum_{i=1}^K \tloss_{t,i}$, we get that
 \[
 \begin{split}
  \sum_{t=1}^T \pa{\loss_{t,I_t} - \loss_{t,J^*_t}} \le& \frac{2\bar{S}\log K}{\eta} - 
\frac{1}{\eta}\log\pa{\alpha^S(1-\alpha)^{T-\bar{S}}} + 
\frac{\log\bpa{T/(S\delta')}+1}{2\gamma} 
+ \pa{\frac{\eta}{2}
+ \gamma} \sum_{t=1}^T \sum_{i=1}^K \tloss_{t,i}
\\
\le& \frac{2\bar{S}\log K}{\eta} - 
\frac{1}{\eta}\log\pa{\alpha^S(1-\alpha)^{T-\bar{S}}} + 
\frac{\log\bpa{T/(S\delta')}+1}{2\gamma} 
\\
&+ \pa{\frac{\eta}{2}
+ \gamma} \sum_{i=1}^K L_{t,i} + \pa{\frac{\eta}{2} + \gamma} \frac{\log\pa{1/\delta'}}{2\gamma},
 \end{split}
 \]
 holds with probability at least $1-2\delta'$.
 where the last line follows from Lemma~\ref{lem:fixbound} and the union bound. 
 Then, after observing that the losses are bounded in $[0,1]$ and choosing $\delta' = \delta/2$, we get that
 \[
 \begin{split}
  R_T^S \le& \frac{\pa{S+1}\log K}{\eta} - \frac{1}{\eta}\log\pa{\alpha^S(1-\alpha)^{T-S-1}} + 
\frac{\pa{S+1}\log K + S\log\pa{\frac {2eT}{S\delta}}}{2\gamma} 
\\
&+ \pa{\frac{\eta}{2}
+ \gamma} K T + \pa{\frac{\eta}{2} + \gamma} \frac{\log\pa{2/\delta}}{2\gamma}
 \end{split}
 \]
 holds with probability at least $1-\delta$.
The only remaining piece required for proving the theorem is showing that
\[
 - \log\pa{\alpha^S(1-\alpha)^{T-\bar{S}}} \le S \log\pa{\frac{eT}{S}},
\]
which follows from the proof of Corollary~1 in \cite{CBGLS12}, and then substituting the choice of $\eta$ and $\gamma$.
\qed

\subsection{The proof of Theorem~\ref{thm:sideobs}}
Before we dive into the proof, we note that Lemma~\ref{lem:fixbound} does \emph{not} hold for the loss estimates 
used by this variant of \exphix due to a subtle technical issue. Precisely, in this case $\prod_{i=1}^K 
\pa{1+\hloss_{t,i}}\neq\sum_{i=1}^K \pa{1+\hloss_{t,i}}$ prevents us from directly applying Lemma~\ref{lem:fixbound}. 
However, Corollary~\ref{cor:allbound} can still be proven exactly the same way as done in Section~\ref{sec:apps}. The 
only effect of this change is that the term $\log(1/\delta')$ is replaced by $K\log(K/\delta')$.

Turning to the actual proof, let us fix an arbitrary $\delta'\in(0,1)$ and introduce the notation
\[
 Q_t = \sum_{i=1}^K \frac{p_{t,i}}{o_{t,i} + \gamma}.
\]
 By the standard \exph-analysis, we have
 \[
  \sum_{t=1}^T \pa{\sum_{i=1}^K p_{t,i} \tloss_{t,i} - \tloss_{t,j}} \le \frac{\log K}{\eta} + 
  \frac{\eta}{2} \sum_{t=1}^T  \sum_{i=1}^K p_{t,i} \pa{\tloss_{t,i}}^2.
 \]
 Now observe that
 \[
 \begin{split}
  \sum_{t=1}^T \sum_{i=1}^K p_{t,i} \pa{\tloss_{t,i}}^2 &=
  \sum_{t=1}^T \sum_{i=1}^K \frac{p_{t,i}}{o_{t,i} + \gamma } \cdot\tloss_{t,i}
  \\
  &\le \sum_{t=1}^T \sum_{i=1}^K \frac{p_{t,i}}{o_{t,i} + \gamma } \cdot \loss_{t,i} + \frac{K\log(K/\delta')}{2\gamma}
  \\
  &\le \sum_{t=1}^T Q_t + \frac{K\log(K/\delta')}{2\gamma},
 \end{split}
 \]
 holds with probability at least $1-\delta'$ by an application of Corollary~\ref{cor:allbound} for all $i$ and taking a 
union bound.
Furthermore, we have
\[
 \begin{split}
  \sum_{i=1}^K p_{t,i} \tloss_{t,i} &=
  \sum_{i=1}^K p_{t,i} \loss_{t,i} + \sum_{i=1}^K  \pa{O_{t,i}-o_{t,i} -\gamma}\frac{p_{t,i}\loss_{t,i}}{o_{t,i} + 
\gamma}
  \\
  &\ge \sum_{i=1}^K p_{t,i} \loss_{t,i} + \sum_{i=1}^K  \pa{O_{t,i}-o_{t,i}}\frac{p_{t,i}\loss_{t,i}}{o_{t,i} + \gamma} 
- \gamma Q_t.
 \end{split}
\]
By the Hoeffding--Azuma inequality, we have
\[
 \sum_{t=1}^T \loss_{t,I_t} \le \sum_{t=1}^T \sum_{i=1}^K p_{t,i} \loss_{t,i} + \sqrt{\frac{T\log(1/\delta')}{2}}
\]
with probability at least $1-\delta'$.
After putting the above inequalities together and applying Lemma~\ref{lem:fixbound}, we obtain the bound
\[
\begin{split}
 R_T \le& \frac{\log K}{\eta} + \frac{\log(K/\delta')}{2\gamma} + \pa{\frac \eta 2 + \gamma}\sum_{t=1}^T Q_t 
 + \frac{\eta}{2} \cdot \frac{K\log(K/\delta')}{2\gamma} + \sqrt{\frac{T\log(1/\delta')}{2}}
 \\
 & + \sum_{t=1}^T \sum_{i=1}^K  \pa{o_{t,i}-O_{t,i}}\frac{p_{t,i}\loss_{t,i}}{o_{t,i}+\gamma}
\end{split}
\]
that holds with probability at least $1-3\delta'$ by the union bound.
To bound the last term on the right hand side, observe that
\[
 X_{t} = \sum_{i=1}^K \pa{o_{t,i}-O_{t,i}}\frac{p_{t,i}\loss_{t,i}}{o_{t,i}+\gamma}
\]
is a martingale-difference sequence for all $i\in[K]$ with $|X_{t}|\le K$ and conditional variance
\[
\begin{split}
\sigma_t^2\pa{X_{t}} =& \EEcc{\pa{\sum_{i=1}^K \pa{o_{t,i}-O_{t,i}}\frac{p_{t,i}}{o_{t,i} + \gamma}}^2}{\F_{t-1}}
\\
\le& \EEcc{\pa{\sum_{i=1}^K O_{t,i} \frac{p_{t,i}}{o_{t,i} + \gamma}}^2}{\F_{t-1}} \quad\qquad\qquad\qquad\mbox{(since
$\EEcc{O_{t,i}}{\F_{t-1}} = o_{t,i}$)}
\\
=& \EEcc{\sum_{i=1}^K \sum_{j=1}^K O_{t,i}O_{t,j} \frac{p_{t,i}}{o_{t,i} + \gamma} \cdot\frac{p_{t,j}}{o_{t,j} +
\gamma}}{\F_{t-1}}
\\
\le& \EEcc{\sum_{i=1}^K \sum_{j=1}^K O_{t,i} \frac{p_{t,i}}{o_{t,i} + \gamma} \cdot\frac{p_{t,j}}{o_{t,j} +
\gamma}}{\F_{t-1}}
\quad\qquad\mbox{(since
$O_{t,j\le 1}$)}
\\
=& \sum_{i=1}^K \sum_{j=1}^K \frac{p_{t,i}o_{t,i}}{o_{t,i} + \gamma} \cdot\frac{p_{t,j}}{o_{t,j} +
\gamma}
\le \sum_{i=1}^K p_{t,i} \sum_{j=1}^K \frac{p_{t,j}}{o_{t,j} + \gamma} = Q_t.
\end{split}
\]
Thus, an application of Freedman's inequality (see, e.g., Theorem~1 of \citet{BLLRS11}), we can thus obtain the bound
\[
 \sum_{t=1}^T X_{t} \le \frac{\log(1/\delta')}{\omega} + (e-2) \omega \sum_{t=1}^T Q_t
\]
that holds with probability at least $1-\delta'$ for all $\omega \le 1/K$. 
Combining this result with the previous bounds and using the union bound, we arrive at the bound
\[
\begin{split}
 R_T \le& \frac{\log K}{\eta} + \frac{\log(K/\delta')}{2\gamma} + \frac{\log(1/\delta')}{\omega} + \pa{\frac \eta 2 +
\gamma + \omega}\sum_{t=1}^T Q_t 
 + \frac{\eta}{2} \cdot \frac{K\log(K/\delta')}{2\gamma} + \sqrt{\frac{T\log(1/\delta')}{2}}
\end{split}
\]
that holds with probability at least $1-4\delta'$. 

Invoking Lemma~1 of \citet{KNVM14} that states that
$$\sum_{i=1}^K \frac{p_{t,i}}{o_{t,i} + \gamma}\leq2\alpha\log\left(1+\frac{\lceil
K^2/\gamma\rceil+K}{\alpha}\right)+2$$
holds almost surely and setting $\delta' = \delta/4$, we obtain the bound
 \[
\begin{split}
 R_T \le& \frac{\log K}{\eta} + \frac{\log(4K/\delta)}{2\gamma} + \frac{\log(4/\delta)}{\omega} + \pa{\eta +
2\gamma + 2\omega}\alpha'T
 + \frac{\eta}{2} \cdot \frac{K\log(4K/\delta)}{2\gamma} + \sqrt{\frac{T\log(4/\delta)}{2}}
\end{split}
\]
that holds with probability at least $1-\delta$, where $\alpha' = \alpha\log\left(1+\frac{\lceil 
K^2/\gamma\rceil+K}{\alpha}\right)+1$.

Now notice that when setting $\eta = 2\gamma = \sqrt{\frac{\log K}{2\alpha T \log(KT)}}$ and $\omega =
\sqrt{\frac{\log(4/\delta)}{2\alpha T \log(KT)}}$, we have $\alpha' \le 2\alpha 
\log(KT)$ and the above bound becomes
\[
 \begin{split}
  R_T \le& \pa{4 + 2\sqrt{\log\pa{4/\delta}}}\cdot\sqrt{2\alpha T \pa{\log^2 K + \log KT}} + \sqrt{\frac{2\alpha
T\log(KT)}{\log K}} \log\pa{4/\delta} + 
\\
&+ \sqrt{\frac{T\log(4/\delta)}{2}} + \frac{K\log\pa{4K/\delta}}{2}.
 \end{split}
 \]
 The proof is concluded by observing that the last term is bounded by the third one if $T\ge K^2/(8\alpha)$.
\qed

\end{document}